\date{\vspace{-5ex}}
\newtheorem{theorem}{Theorem}[section]
\newtheorem{proposition}[theorem]{Proposition}
\newtheorem{lemma}[theorem]{Lemma}
\newtheorem{corollary}[theorem]{Corollary}
\newtheorem{definition}[theorem]{Definition}
\newtheorem{remark}[theorem]{Remark}
\newtheorem{example}{Example}
\newcommand{\FS}{\mathcal {F}}
\newcommand{\Diff}{\mathcal B}
\newcommand{\id}{ {Id}}
\newcommand{\GM}{ \mathcal S}
\newcommand {\Tr} {\mathcal T}
\newcommand {\Aff}{\mathcal A}
\newcommand{\Monge}{\mathbb M}
\newcommand{\TMP} {\mathbb T}
\newcommand{\MS}{\mathcal W}
\newcommand{\R}{\mathbb{R}}
\newcommand{\Rp}{\R_{+}}
\newcommand{\cH}{\mathcal H}
\newcommand{\Ha}{\mathcal H_a}
\newcommand{\mP}{\mathcal P}
\newcommand{\grad}[1]{\triangledown #1}
\newcommand{\dprime}{\prime\prime}
\newcommand{\fp}[1]{f^{\prime}_{#1}}
\newcommand{\gp}[1]{g^{\prime}_{#1}}
\newcommand{\fpi}{(f^{\prime})^{-1}}
\newcommand{\gpi}{(g^{\prime})^{-1}}
\newcommand{\supp}{{\rm supp\,}}
\newcommand {\bpf} {\begin {proof}}
\newcommand {\epf} {\end {proof}}
\title{Partitioning signal classes using transport transforms for data analysis and machine learning}
\begin{document}
	\date{}

	\author{Akram Aldroubi, Shiying Li, Gustavo K. Rohde
		}

\keywords{convexity, transport-transforms, convex groups, data analysis, classification, machine learning}
\subjclass [2010] { }

\maketitle

\begin{abstract}
A relatively new set of transport-based transforms (CDT, R-CDT, LOT) have shown their strength and great potential in various image and data processing tasks such as parametric signal estimation, classification, cancer detection among many others. It is hence worthwhile to elucidate some of the mathematical properties that explain the successes of these  transforms when they are used as tools in data analysis, signal processing or data classification. In particular, we give conditions under which classes of signals that are created by algebraic generative models are transformed into convex sets by the transport transforms.  Such convexification of the classes simplify the classification and other data analysis and processing problems when viewed in the transform domain.   More specifically, we study the extent and limitation of the convexification ability of these transforms under  an algebraic generative modeling framework. We hope that this paper will serve as an introduction to these transforms and will encourage mathematicians and other researchers to further explore the theoretical underpinnings and algorithmic tools that will help understand the successes of these transforms and lay the groundwork for further successful applications. 
\end{abstract} 

\section{Introduction}

Recently a set of transforms that have close relationship to the mathematics of optimal transport \cite{monge1781,kantorovich1942translation,brenier1991,villani2003topics} have been introduced for representing signals,  images and data. 
In this framework, a signal $p$ is a normalized non-negative function, while its transform $\widehat p$ is a function which is consistent with an optimal transport map. Succinctly, the transform $\widehat p$ of a function $p$ is the transport map that morphs (transports)  a chosen reference function $r$ to $p$. This general idea has led to the development of several types of non-linear transforms for signal and image data, including the Cumulative Distribution Transform (CDT) \cite{park2017}, the Radon CDT \cite{kolouri2016c}, and the Linear Optimal Transport (LOT) \cite{wang2013,kolouri2016b} transforms. 
Numerous problems in signal and image analysis, and in machine learning have been successfully and efficiently solved in
the transport transform domains \cite{kolouri2017optimal}.  
At their core, the aforementioned transforms capture the movement (transport) of the signal amplitude (or pixel intensity) along the independent variable (usually time for {one-dimensional} signals or space for images) in a mathematically rigorous manner.
 This transport encoding of signal or image intensity has enabled numerous interesting applications, many of which were not possible with other, more standard, techniques. The transport-based transforms have been successfully applied to many  data analysis problems including parametric signal estimation \cite{Rubaiyat20} (Rubaiyat et al., {2020}), signal and image classification \cite{park2017,kolouri2016c,kolouri2016b} (Park et al., 2017; Kolouri et al., 2016, Kolouri et al., 2016), modeling turbulence  \cite{emerson2020turbulence}(Emerson et al., 2020), cancer detection \cite{ozolek2014, tosun2015detection} (Ozolek et al., 2014; Tosun et al., 2015), vehicle-type recognition \cite{Guan2019} (Guan et al., 2019), and others \cite{kundu2018discovery}(Kundu et al., 2018).
 Although not based on the transport transforms,  transport maps have also been used in many other successful applications of data analysis such as \cite{haker2004, rubner2000, ni2009local, shen2018wasserstein,arjovsky2017wasserstein,schiebinger2019optimal}.

In the transport transform framework, the set of signals  $\mP_d$ (see \eqref{SigSpaces} below) consists of non-negative functions. Except for the zero signal, they are normalized to have $L_1$-norm equal to 1. For  signal analysis, processing or classification problems, the signals are modeled  by deformations of template signals. For example, a smiling face-image $p_h$ of an individual can be thought of (approximately) as a deformation of a neutral  face-image $p$ of the same individual  via the diffeomorphism $h$, see Figure \ref {fig:intro_face_demo}. A class of signals $\GM_{p,\cH}$ can be generated by applying a set of diffeomorphism $\cH$ to a signal $p$. The process of generating such class from $\cH$ will be called an algebraic generative model. This is different from the statistical generative model in machine learning which assumes  that  the data is generated from an underlying conditional probability distribution, and the aim is to find this distribution.   In contrast, the algebraic generative model creates signal classes from a template and a set of diffeomorphisms that produce the class by morphing the template (no probabilistic assumptions about the classes are used). Under this algebraic generative modeling assumption, the transport transforms can be particularly useful for applications. 

Data analysis, and machine learning methods when applied  in the transport transform domain work best if the data has formed via a transport type phenomena. 
For example, take the task of modeling the difference between images of  smiling faces versus faces with neutral expression as in Figure \ref{fig:intro_face_demo}. Building a mathematical model to automatically recognize the difference between these two classes is still a challenging problem, and  learning a classifier in raw image domain is a difficult task. However, a smiling face is formed by muscle mass movement (transport) from a neutral face.  Thus, in transport transform domain (LOT), this problem becomes simple as shown in Figure \ref{fig:intro_face_demo}. The  two underlying classes (smiling vs neutral) seem to cluster into two disjoint convex sets. Thus classification can be easily achieved using the Hyperplane Separation Theorem (or the Hahn-Banach Separation Theorem). In addition,  because the transform is invertible, any point in transform space can be inverted (see caveats below), and thus any point along the estimated classifier line (i.e., the line orthogonal to the  separating hyperplane in the transform domain) can be visualized in image space. As can be seen from the inverse of the linear classifier, the technique is able to correctly summarize the differences between the two classes by providing an ``average" face that goes from neutral to smiling as one traverses along the  classifier line. This idea of utilizing transport-based transforms to perform morphometry operations was first introduced in \cite{wang2013,basu2014} where the goal was to combine learning techniques with transport-based representation techniques to decode important trends in a given dataset.
 
 As in the previous example, it has been demonstrated \cite{park2018, kolouri2016c} that one of the main advantages of solving estimation and detection (i.e. classification) problems in transport-transform (CDT, R-CDT, LOT) domain relates to rendering signal and image classes  convex. The idea is outlined in Figure~\ref{fig:linear_separability}. Two classes in signal space $\mathbb{P}$ and $\mathbb{Q}$ are displayed. They consist of signals containing ``one bump" and ``two bumps", respectively, observed under random translation along the independent variable. The right panel shows the same classes expressed in transport transform domain. It is clear that transforming the set of 1D signals has rendered the problem easier to solve, given the ensuing manifold (the geometric structure encompassing the transformed data) becomes linear in this example.
 {Furthermore, it is possible to  use  the convexity property of the generative model in the transform domain to better model a  data class when the set of training data has few samples (e.g., with limited training data). This is an important property for data classification problems and we will expand on this concept in the concluding remarks of  the summary and open questions (Section \ref{Sec: conclusion}).}
  Recently, Shifat-E-Rabbi et al. \cite{shifaterabbi2020radon}, have made use of such properties to propose a transport transform nearest subspace method for image classification. Empirical tests show the method, which is simple to compute and does not require iterative tuning of hyperparameters, can rival that of popular neural network type classifiers in certain problems, for a small fraction of the computational cost and with fewer training samples.

\begin{figure}[!hbt]
    \centering
    \includegraphics[width=13cm]{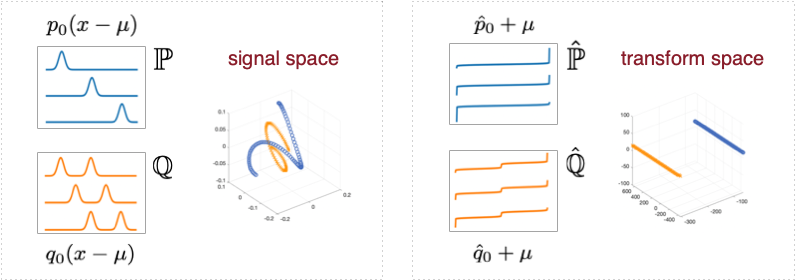}
    \caption{An example of a signal classification problem. In this example, class 1 consists of random translations of a ``one bump'' signal and class 2 consists of random translations of a ``two bump" signal.}
    \label{fig:linear_separability}
\end{figure}
 

\begin{figure}[!hbt]
    \centering
    \includegraphics[width=13cm]{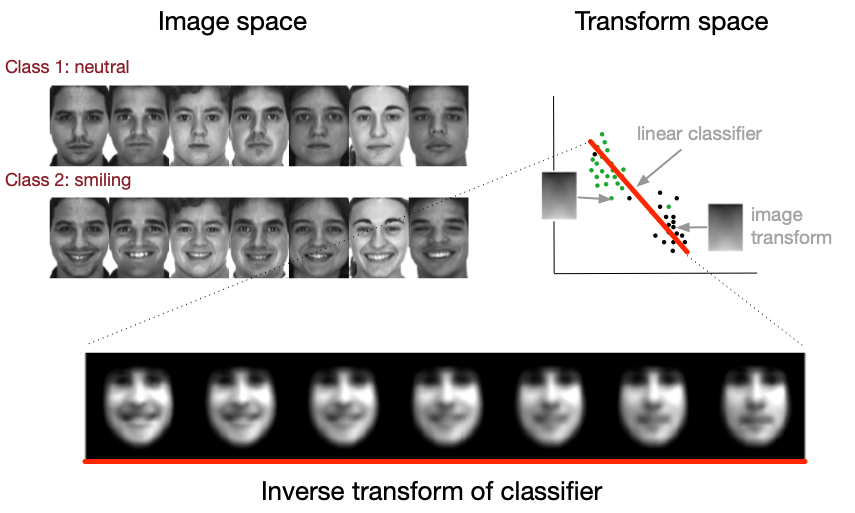}
    \caption{ Automated modeling differences between smiling and neutral facial expressions using transport transforms. Top left: sample images of both neutral and smiling classes. Top right: depiction of data and a linear classifier (line in red) in transform domain.  Bottom: given that the transforms are inverible, the inverse of the classifier can be obtained. As shown by the inverse transform of the classifier line, the classifier consists of an ``average'' image, while different directions along the discriminant line (in red) represent how wide a person smiles. The technique is able to correctly summarize the differences between the two classes by providing an ``average" face that goes from neutral to smiling as one traverses along the  classifier line. }
    \label{fig:intro_face_demo}
\end{figure}

\subsection{An overview of transport transforms}

{Data analysis and processing algorithms often assume  mathematical models of how the data {have} been generated. For example, data compression algorithms work well using thresholding in wavelet transform domain if the data is assumed to have  large smooth regions. Similarly,  analysis,  synthesis, classification and other algorithms tend to work best in the transport-based transforms domains when the data to be processed is generated by physical processes that include transport. This mathematical model on the data sets is the so-called \emph{Algebraic Generative Model, and it is described below}.}

    \subsubsection{Algebraic generative model \texorpdfstring{$\GM_{p,\cH}:= \{ p_h=|\det J_h|\cdot p\circ h  \mid  h \in \cH\}$}{genmodel} } {In this model a signal $p_h$ is generated by applying a differentiable one-to-one spatial transformation $h\in \cH$, to a fixed (but often unknown) template $p$ via $p_h=|\det J_h|\cdot p\circ h$, where $\cH$ is a set of diffeomorphisms and $J_h$ is the Jacobian matrix of $h$.  For example, when $\cH$  is a set of translations, the setup is useful to model time delay estimation and tracking problems \cite{Rubaiyat20}. When $\cH$ includes nonlinear deformations, it can be used to model mass (e.g. molecule) concentrations inside cells \cite{basu2014}, tissues in human brains \cite{kundu2018discovery}, photon distributions in turbulent media \cite{park2018multiplexing}, and others \cite{kolouri2017optimal,Rubaiyat20}. 
    The complete mathematical specification of the algebraic generative model is available in Section \ref{subsec: alg model}.}
    \subsubsection{Transport transforms}  {Given a signal $s$, its transform is defined by the optimal transport map (cf. Section \ref{sec: Monge}) between the signal $s$ and a chosen reference function $r$. The transform defined this way yields a number of interesting properties that facilitate the solution to many data analysis problems when the data is consistent with  algebraic generative models as above. In particular, if the data at hand arises from the generative model outlined above (described in more detail in Section \ref{subsec: alg model}), the set of signals will form a convex set in transform space. This has useful implications for estimation and classification problems as described below. }

\subsection{Contributions}
  The ability of the emerging transport-based transforms to facilitate the solution of  certain estimation, classification, data processing and analysis problems is related to the extent to which they are able to represent the data as convex sets. As such, this manuscript is devoted to clarifying the ability of different transport-based transforms to render the classes produced by algebraic generative models into convex sets. More precisely, we specify conditions which will render signal or image classes convex in transform domain. 
  \subsubsection{Convexity condition for one-dimensional generative models}
  {We characterize the exact condition under which a set of signals produced by the algebraic generative models (one-dimensional) become convex in the transform (CDT) domain. Specifically, the set of transformed signals, denoted by $\widehat{\GM}_{p,\cH}$, is convex for all $p$ if and only if the set $\cH^{-1}$ of transformation diffeomorphisms is convex (cf. Theorem \ref{ConvexGroup1dSet} and Corollary \ref{ConvexGroup1d}). Moreover, when $\cH$ is a convex group, the transform space $\widehat \mP_1$ (cf. Table \ref{TblSymbl}) can be partitioned into convex equivalent classes. We give various examples of convex groups $\cH$ and show that there are infinitely many of them (cf. Section \ref{subsec: 1dConvGrp}). }
  
 \subsubsection{Convexity condition and limitation for multi-dimensional generative models}
{For dimension $d\ge 2$, the situation is more complicated than for the case $d=1$. For this case, we only provide a sufficient condition under which the algebraic generative models ${\GM}_{p,\cH}$ become convex in the transform domain. Specifically, if  the  set $\cH^{-1}$  of transformation diffeomorphisms is a convex subset satisfying $\widehat p_h = h^{-1}\circ \widehat p$ for all $h\in \cH$ and for all $p\in \mP_d$ (cf. Equation \eqref{eq:d-comm} and Table \ref{TblSymbl}), then $\widehat{\GM}_{p,\cH}$ is convex for any $p$ (cf. Theorem \ref{ConvexGroupmdSet}).  Equation \eqref{eq:d-comm}  will be referred as the composition property for CDT \cite{park2017}. 
When $d\geq 2$, if we require   that the composition condition on $\cH$  holds for all $p$, then we show that $\cH$ must be a subset of translations and isotropic scaling diffeomorphisms (cf. Theorem \ref{thm: nD_group}).  However, by relaxing the composition requirement to hold on certain subsets of signals,   the {set} of diffeomorphisms $\cH$ that guarantee the aforementioned convexity result can be expanded. In particular, we give a relaxation in dimension two (cf. Section \ref{sect: relax}) where a set  $\cH$, larger than the set of all translations and isotropic scaling diffeomorphisms (cf. Remark \ref{rmk: propchr}), can guarantee the convexity results in transform domain if the data of interest conform to the generative model in a more restrictive sense (cf. Theorem \ref{thmn2hatformula}).} 
 
 \subsubsection{Practical implications}
  {The convexity results that we present in this manuscript have two immediate practical implications for image and signal processing.}

  {A first implication is the simplification of classification problems in the sense that there exists a linear classifier that can perfectly separate disjoint data arising from the aforementioned algebraic generative model in transform space. For example, the sets generated by simple models that involve only translations (see e.g., Figure \ref{fig:linear_separability}) can have a complex geometry in the signal domain and generally are not linearly separable. Compared to the unknown (usually quite complicated) geometry in the signal domain, the convexity in transform domain guarantees the existence of a linear classifier that will perfectly separate the data classes. While this  manuscript does not prescribe a particular method to separate { convex data sets} in transform domain, numerous machine learning methods for that purpose exist \cite{hastie2009elements}{;} specifically linear support vector machines \cite{cortes1995}, Fisher discriminant analysis \cite{fisher1936,Belhumeur97}, linear logistic regression \cite{mccullagh1989generalized} are popular algorithms, which perform differently depending on how the data is statistically distributed over the same geometry, can be used.}

  {As a second implication, the property that the set of a transformed signal class is convex allows us to solve many interesting estimation problems. Specifically, this property provides a necessary condition for designing linear least-squares techniques in the transport transform domain  which, otherwise would necessitate nonlinear and nonconvex optimization and thus be difficult to solve.   Rubaiyat et al.\cite{Rubaiyat20}, for example, assume that the measured signal is deformed from the target signal via transformations that lie in a finite linear subspace (in particular, space of polynomials of a certain degree) and obtained fast and accurate estimation such as time delay and quadratic dispersion parameters in various applications.}
  

\subsection{Paper organization}
The rest of the paper is organised as follows. In Section \ref{Sec: prelim}, the transport-based transforms, their connections to the optimal transport theory and an associated generative model are introduced. In Section \ref{Sec: oneD}, convexification results (Proposition \ref {Part}, Theorem \ref {ConvexGroup1dSet} and corollaries) and examples of the CDT for one-dimensional  generative models are presented. In Section \ref{Sec: multi-D}, we present the limitations on the generative model with respect to convexification by the LOT in dimension $d\geq 2$ (Theorem \ref {thm: nD_group}) and a possible relaxation to mitigate the limitations in dimension two (Theorem \ref {thmn2hatformula}). A more detailed summary of results and a discussion of open questions are given in Section \ref{Sec: conclusion}.

\section{Preliminaries, Notation and Model Assumptions}\label{Sec: prelim}
In this section, we define the various transforms, their domains, ranges, and their connections to optimal transport theory,  and introduce a signal model of interest.

\subsubsection{Signal and Transform spaces}
The signal spaces we consider consist of non-negative Lebesgue measurable functions that are compactly supported and normalized, i.e., a non-zero signal $p$ is first normalized to have its $L_1$-norm $\|p\|_1=1$. The space of all signals is formally  described by  

\begin{equation} \label{SigSpaces}
	\mP_d: = \{p: \R^d\rightarrow \Rp \mid \supp (p)=\Omega_p~\textrm{is compact}, ~  \int_{\R^d} p(x)dx=1\}.
\end{equation}
Note that the signals in the above set have finite moments since they are compactly supported. Since signals in  $\mP_d$ have $L_1$-norm equal to 1, it is sometimes useful to think of them as probability density functions.
The class of transforms $\widehat \mP_d$ belongs to the set of functions $\FS_d$ that are solutions to the Monge Transport Problem discussed in Subsection \ref{sec: Monge} below: 
\begin{equation}\label{multiDsig}
	\FS_d: =\{f: \R^d\rightarrow \R^d \mid f = \grad \phi ~ \textrm{for some convex}  ~ \phi : \R^d \rightarrow \R \}.
\end{equation}
In particular, $\FS_1$ consists of all a.e. non-decreasing functions from $\R$ to $\R$.

\subsection{Transforms and the Monge problem}\label{sec: Monge}
The general theory of optimal transport is a deep and well-developed area that started with a transport problem due to Monge \cite{monge1781}. The theory and solutions to this problem took many detours and gave rise to a general theory of optimal transports which was spearheaded and developed by Kantorovich, Brenier, Villani and many others in the last two hundreds years \cite{kantorovich1942translation,brenier1991,villani2003topics,santambrogio2015optimal}. {For a quick introduction to the key concepts in optimal transport theory, see \cite{thorpenotes} by Thorpe.}  For our purpose we use one of the simplest results of the theory to  introduce the needed transforms.

Given a fixed reference function $r \in \mP_d$, the transform $\widehat p$ of $p \in \mP_d $ is the unique solution to the Monge optimal transport problem:
\begin{equation} \label{Monge}
 \text {minimize  } \Monge (T)=\int_{\R^d}\big|x-T(x)|^2r(x)dx   
\end{equation}
over all  $T$ that makes the push-forward relation below hold:
\begin{equation}
\label{pushforward} 
\int_B p(y)dy=\int_{T^{-1}(B)} r(x)dx,
\end{equation}
holds for every measurable set $B.$  In measure theory, the relation \eqref{pushforward} above is written more compactly as
\begin{equation} \label{pushforwardmeas} 
\mu_p=T_\#\mu_r,   
\end{equation}
where $d\mu_p=pdx,\;  d\mu_r=rdx$. 
When $T$ is a $C^1$ diffeomorphism, the constraint above becomes
\begin{equation} \label{MassCons}
r(x)= |\det \big(J_T\big)|p\big(T(x)\big),
\end{equation} 
where $J_T$ is the Jacobian matrix of $T$.

The existence and uniqueness of solutions to the Monge problem under the assumptions of the signal spaces $\mP_d$ in this paper is a special case of the well-known Brenier's Theorem  described below. The function $T$  in Equation \eqref{pushforwardmeas} can be interpreted as a mass preserving map between the reference signal $r$ (a reference mass distribution) and the given signal $p$ (a mass distribution described by $p$). With this point of view, Monge's Problem can then be interpreted as finding the optimal transport map that will transform a mass distribution to another distribution of equal mass. 

Let $r\in \mP_d$ and $\TMP_r:\mP_d\to \FS_d$ be the operator that maps an element $p \in \mP_d$ to its optimal transport map described above. Under the assumptions on $\mP_d$, we define the transport transform of a function $p$ with respect to the reference $r$ as 
\begin{equation} \label{TrprtTrfrm}
    \widehat p=\TMP_r(p).
\end{equation}

\begin{definition} [CDT, LOT, and R-CDT] ${}$\label{CDTLOT}
\begin{enumerate}
    \item when $d=1$ in \eqref{TrprtTrfrm}, $\TMP_r$ is called the CDT. 
    \item when $d\ge 2$, $\TMP_r$ is called the Linear Optimal Transport (LOT) transforms (although the transform itself is nonlinear).
    \item The R-CDT consists of the composition of the Radon Transform and the CDT transform.
\end{enumerate}
\end{definition}

\begin{remark}
In one dimension, as mentioned above, the optimal transport maps are non-decreasing functions defined on $\R$. In particular, given $p\in \mP_1 $, its CDT transform $\widehat p$ with respect to a reference $r$ can be equivalently defined through the following relation\footnote{Note that $\widehat p(x)$ defined in \eqref{EqivCDTGen} is finite $\mu_r$ a.e., but could be $+\infty$ for some $x$.}
\begin{equation} \label{EqivCDTGen}
    \widehat p(x)= \sup\Big\{t\; \mid \int_{-\infty}^{t} p(\xi)d\xi \le \int_{-\infty}^{x}r(\xi)d\xi\Big\}. 
\end{equation}
If $\textrm{supp}(p)$ is an interval and $p$ is continuous on $
\supp(p)$, then \eqref{EqivCDTGen} simplifies to
\begin{equation} \label{EqivCDT}
    \int_{-\infty}^{\widehat p(x)} p(\xi)d\xi = \int_{-\infty}^{x}r(\xi)d\xi, \quad \forall x\in \supp(r).
\end{equation}
The equivalent formulation \eqref {EqivCDT} can be derived by integrating Equation \eqref{MassCons}, replacing $T$ by $\widehat p$, and making use of the property that $\widehat p$ is increasing a.e. $\mu_r$.
\end{remark}

 Under the assumption on the signals spaces,  all the transforms above are non-linear and are injective a.e. $\mu_r$.  See Figure \ref{fig:CDT} for examples of CDTs with the reference $r$ being the characteristic function on $[0,1].$  

\begin{figure}[!hbt]
    \centering
    \includegraphics[width=13cm]{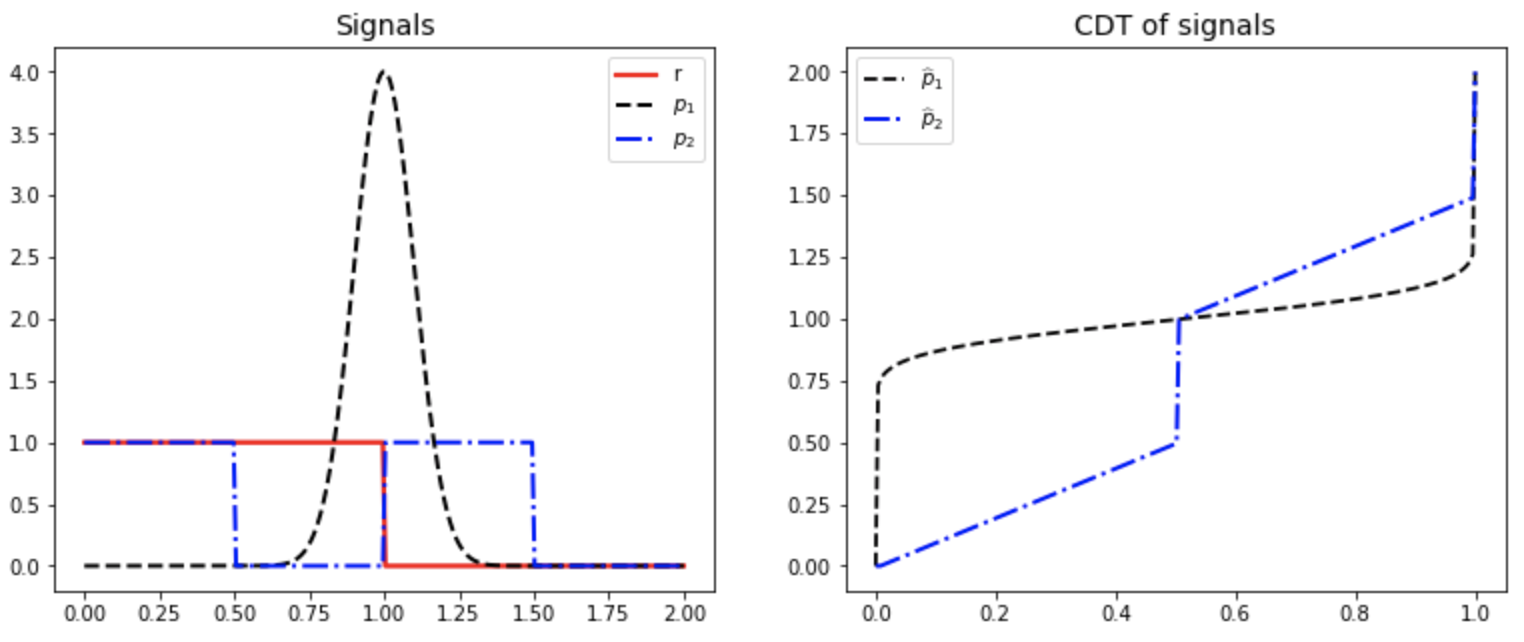}
    \caption{CDT examples with $r = \chi_{[0,1]}$. Left panel: the reference function $r$ (red), a Gaussian like signal (black), and the sum of two characteristic functions (blue). Right panel: the transform of the Gaussian like function (black) and the transform of the sum of two characteristic functions (blue).}
    \label{fig:CDT}
\end{figure}
A summary of the various spaces and symbols is provided in Table \ref{TblSymbl}. 
\begin{table*}[!hbt] 
\centering
\normalsize
\caption{Notation}
\label {TblSymbl}
\begin{tabular}{ll}
\hline
Symbols                & Description    \\ \hline
$p$,~$q$,~$r$ & normalized functions on $\R^d$\\
$\Omega_p$ & Support of a function $p$ \\
$\mP_d$ &  the set of compactly supported {non-negative} normalized functions  \\
$\FS_d$& the set of optimal transport maps from $\R^d$ to $\R^d$\\
$\FS_d^G$& the set of $C^1$ diffeomorphisms in $\FS_d$ \\
$\cH$ & a subgroup of  $\FS_d^G$ \\
$\cH_a$ & the group of translation and isotropic scaling diffeomorphisms on $\R^d$ \\ 
$\GM_{p,\cH}$ & a signal class generated by template $p$ under the diffeorphisms in $\cH$\\
$\widehat p$& the CDT/LOT transform of $p$ with respect to a fixed reference $r$\\
$\widehat \GM_{p,\cH}$ & the transformed signal class $\{\widehat p_h \mid  p_h \in \GM_{p,\cH}\}$\\
$\TMP_r$ & the CDT/LOT transform operator:  $\TMP_r(p)= \widehat p$\\
{$\widehat \mP_d$} & {transform space $\{\widehat s \mid  s\in \mP_d\}$ }\\
\hline
\end{tabular}
\end{table*}
\subsubsection{Connection with the Wasserstein distance}
{Given two probability measures $\mu$ and $\nu$,  the Wasserstein-$2$ distance between them is defined as 
\[W_2(\mu,\nu):= \big(\inf\limits_{\pi \in \Pi(\mu,\nu)}\int_{\R^d\times \R^d} |x-y|^2 d\pi(x,y) \big)^{\frac 1 2},
\]
where $\Pi(\mu,\nu)$ is the set of measures on $\R^d\times \R^d$ with $\mu$ and $\nu$ as marginals \cite{villani2003topics}. 
In the one-dimensional case, one can show that the CDT transform defines an embedding from $\mP_1$ with the $W_2$-metric to the transformed space $\widehat \mP_1$ \cite{park2017}. In particular, $W_2(\mu_p,\mu_q) = ||(\widehat p-\widehat q)\sqrt{r}||_{L^2}$ for any $p,q\in \mP_1$. However, when $d\ge 2$, this embedding property does not hold in general. The Euclidean-type distance $||(\widehat p-\widehat q)\sqrt{r}||_{L^2}$ is referred as the linearized optimal transport (LOT) between $p$ and $q$  when $d\ge 2$ and has been shown useful in image pattern recognition, discrimination and visualization problems \cite{wang2013,kolouri2016b}. }

\subsection{Optimal transport maps}\label{subsec: op maps}
We start with a special case of Brenier's Theorem (see e.g., \cite{brenier1991,santambrogio2015optimal,villani2003topics} and the references therein) which we need for this investigation. In particular, using Theorem 2.12 in \cite{villani2003topics} and Theorem 1.48 in \cite{santambrogio2015optimal} we have
\begin{theorem}[Brenier's Theorem]\label{thmBrenier}
Let $p,r \in\mP_d$. Then there exists a unique solution $T \in \FS_d$ (up to sets of $\mu_r$-measure zero) to the \textit{Monge transport problem} associated with $r,p$ and the cost function $c(x,y)= |x-y|^2$. Conversely, let  $r \in\mP_d$ and $T \in \FS_d$ {such that $\int_{\R^d} |T(x)|^2r(x)dx<\infty$}. Then $T$ is optimal for  the Monge Problem above for the function $p\in\mP_d$ satisfying $\mu_p=T_\#\mu_r$.
\end{theorem}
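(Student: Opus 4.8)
The plan is to deduce Brenier's Theorem from the Kantorovich relaxation of the Monge problem \eqref{Monge}--\eqref{pushforwardmeas}, combined with the cyclical-monotonicity description of optimal couplings; the standard background facts I would quote as black boxes can be found in \cite{villani2003topics,santambrogio2015optimal}. First I would enlarge the competitor class: instead of transport maps $T$ with $T_\#\mu_r=\mu_p$, consider all couplings $\pi\in\Pi(\mu_r,\mu_p)$, i.e.\ probability measures on $\R^d\times\R^d$ with first marginal $\mu_r$ and second marginal $\mu_p$, and minimize $\int_{\R^d\times\R^d}|x-y|^2\,d\pi(x,y)$. Since $p,r$ are compactly supported, $\mu_r$ and $\mu_p$ have finite second moments, so this infimum is finite; $\Pi(\mu_r,\mu_p)$ is tight and hence weakly sequentially compact by Prokhorov's theorem, and $\pi\mapsto\int|x-y|^2\,d\pi$ is weakly lower semicontinuous, so an optimal $\pi^\ast$ exists.

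Next I would show the support of any optimal $\pi^\ast$ is $c$-cyclically monotone for $c(x,y)=|x-y|^2$: if finitely many points of $\supp\pi^\ast$ violated this, one could re-route a small amount of mass cyclically among them and strictly lower the cost, contradicting optimality. Expanding the square, this condition is precisely $\sum_i\langle x_i,\,y_i-y_{i+1}\rangle\ge 0$ along cycles in $\supp\pi^\ast$, which by Rockafellar's theorem is equivalent to $\supp\pi^\ast$ lying in the graph of the subdifferential $\partial\phi$ of a proper lower semicontinuous convex $\phi:\R^d\to\R\cup\{+\infty\}$. Now the absolute continuity of $\mu_r=r\,dx$ enters: a finite convex function is differentiable Lebesgue-a.e.\ on the interior of its domain, so $\phi$ is differentiable $\mu_r$-a.e.\ and $\partial\phi(x)=\{\grad\phi(x)\}$ for $\mu_r$-a.e.\ $x$. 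Hence $\pi^\ast$ is concentrated on the graph of $T:=\grad\phi$, i.e.\ $\pi^\ast=(\id,T)_\#\mu_r$; reading off the second marginal gives $T_\#\mu_r=\mu_p$, so $T$ satisfies \eqref{pushforwardmeas}, is optimal, and lies in $\FSd$ by \eqref{multiDsig}.

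For uniqueness up to $\mu_r$-null sets, suppose $T_1,T_2\in\FSd$ both solve the Monge problem; then $\pi_i:=(\id,T_i)_\#\mu_r$ are both Kantorovich-optimal, hence so is $\tfrac12(\pi_1+\pi_2)$ by linearity of the objective. Applying the previous step to this averaged plan yields a single convex $\psi$ whose subdifferential graph contains $\supp\pi_1\cup\supp\pi_2$, and $\mu_r$-a.e.\ differentiability of $\psi$ forces $T_1(x)=\grad\psi(x)=T_2(x)$ for $\mu_r$-a.e.\ $x$. For the converse, let $r\in\mP_d$ and $T=\grad\phi\in\FSd$ with $\int_{\R^d}|T(x)|^2r(x)\,dx<\infty$, and let $p\in\mP_d$ with $\mu_p=T_\#\mu_r$ as in the statement. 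The graph of the subdifferential of a convex function is $c$-cyclically monotone, so $\pi:=(\id,T)_\#\mu_r\in\Pi(\mu_r,\mu_p)$ is concentrated on a cyclically monotone set; since both marginals have finite second moment — here using $\int|T|^2r<\infty$ — such a coupling is optimal for the Kantorovich problem, so $T$ attains the Monge infimum, i.e.\ $T$ is optimal for $p$.

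I expect the genuinely nontrivial ingredients to be (i) Rockafellar's characterization — in both directions, that a $c$-cyclically monotone set sits inside some $\partial\phi$ and that $\partial\phi$ is $c$-cyclically monotone — and (ii) the implication ``cyclically monotone support $\Rightarrow$ optimal,'' which is exactly the step that uses the second-moment hypothesis $\int|T|^2r<\infty$ (without integrability of the cost there are known counterexamples). The remaining delicate point is the measure-theoretic bookkeeping upgrading ``$\supp\pi^\ast\subseteq\operatorname{graph}\partial\phi$'' to the clean identity $\pi^\ast=(\id,\grad\phi)_\#\mu_r$, which is where the absolute continuity of $\mu_r$ is indispensable; everything else (Prokhorov, lower semicontinuity, the perturbation argument for cyclical monotonicity) is routine.
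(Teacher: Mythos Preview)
The paper does not give its own proof of this theorem: immediately before the statement it says ``using Theorem 2.12 in \cite{villani2003topics} and Theorem 1.48 in \cite{santambrogio2015optimal} we have'' and then quotes Brenier's theorem as a black box. So there is no in-paper argument to compare against.

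Your proposed proof is the standard Kantorovich-relaxation route and is essentially the argument one finds in those very references: pass to couplings, get an optimizer by Prokhorov plus lower semicontinuity, show its support is $c$-cyclically monotone (equivalently, cyclically monotone for the quadratic cost), invoke Rockafellar to place the support inside $\partial\phi$ for a convex $\phi$, use absolute continuity of $\mu_r$ to upgrade $\partial\phi$ to $\grad\phi$ $\mu_r$-a.e., deduce that the optimal plan is induced by a map, and run the averaging argument for uniqueness. Your identification of the two genuinely nontrivial inputs --- Rockafellar's theorem in both directions, and the implication ``cyclically monotone support with finite cost $\Rightarrow$ optimal'' --- is accurate, as is the remark that the second-moment hypothesis $\int|T|^2r<\infty$ is exactly what makes the converse go through. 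The only point I would flag for care in a full write-up is the one you already single out: passing from ``$\supp\pi^\ast\subset\operatorname{graph}\partial\phi$'' to ``$\pi^\ast=(\id,\grad\phi)_\#\mu_r$'' needs a disintegration argument together with the fact that the non-differentiability set of a convex function is Lebesgue-null, not merely contained in the boundary of the domain. With that caveat, the outline is correct and matches what the cited references do.
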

\begin{remark}
Brenier's Theorem is much more general than the version described above. Specifically, Brenier's general Theorem describes the situation where $r,p$ in the Monge Problem are replaced by two probability measures $\mu,\nu$ that do not necessarily have associated density functions and where  the term  $|x-y|^2$ is replaced by a more general term $c(x,y)$. 
\end{remark}

\subsection{Algebraic generative models}\label{subsec: alg model}
{In designing algorithms for data analysis, processing or classification, a mathematical model of how the data {have} been generated is often assumed.  For example, algorithms that work well on the wavelet transforms of images often assume the underlying class of images is well modeled by functions that have Fourier transforms that are well concentrated in some regions of frequency domain (e.g., near the origin). In a similar way, processing algorithms work best {with} the transport-based transforms when the data to be processed {are} generated by physical processes that include transport. Examples where algebraic generative models combined  with transport-based transforms have been used effectively in applications including classification of cancerous vs normal cells \cite{basu2014}, pattern analysis of tissues in human brains \cite{kundu2018discovery}, decoding optical communications in turbulent media \cite{park2018multiplexing}, and others \cite{kolouri2017optimal,Rubaiyat20}}.

In contrast with the generative model definition typically used in machine learning, the algebraic models assume that a class of functions $\GM \subset \mP_d$ is  generated from a  function $p\in \mP_d$ by a mass transport phenomena. Although there are infinitely many ways to transport $p$ to form $q \in \GM$,  absent other information related to the generative process, the least action principle from physics often provides one with plausible solutions.  In these circumstances a set $\cH \subset \Diff$ of optimal transport maps can be used as a generative model, where $\Diff$ is set of $C^1$-diffeomorphisms on $\R^d$.  In summary, the set of optimal transport maps used for our generative modeling is 
\begin{equation}\label{multiDsigmodel}
	\FS_d^G: =\{f\in \Diff \mid f = \grad \phi ~ \textrm{for some convex}  ~ \phi : \R^d \rightarrow \R \}.
\end{equation}
Given a function $p\in \mP_d$ (signal), a class of functions (signals) is generated by the action of a set of diffeomorphism $\cH\subset \FS_d^G$ on $p$ via
\begin{equation} \label {GenClass}
	\GM_{p,\cH}:= \{ p_h=|\det J_h|\cdot p\circ h  \mid  h \in \cH\},
\end{equation}
where $J_h$ denotes the Jacobian matrix of $h$.
It is not hard to check that $\GM_{p,\cH}\subset \mP_d$.
One of the simplest examples of such a generative model is described in Figure \ref {fig:linear_separability}. 
A natural assumption on the set of diffeomorphisms $\cH$ used in the generative model is that it has a group structure (see \cite{park2017}): 
\begin {enumerate}
\item $\cH$ is closed under composition.
\item  $\id\in \cH$.
\item if $h \in \cH$, then $h^{-1}\in \cH$.
\end {enumerate}
In this generative model, the set $\cH$ has a group structure that does not depend on $p\in \mP_d$. However, other  generative models are possible where $\cH$  does not have a group structure, or where  $\cH=\cH(p)$   depends on the initial function $p$. {In practice, there are applications which are more appropriately modelled using an $\cH$ that has a group structure and also situations where a group structure  is  not needed.  General convexity results without assuming $\cH$ to be a group will  also be presented in later sections.}

\section {cdt and generative models   in one dimension}\label{Sec: oneD}
In this section we consider a set of transformations that act on one-dimensional functions (the signals) and produce  classes of functions. For this situation \eqref {GenClass} becomes
\begin{equation}
	\GM_{p,\cH}:= \{ p_h= h^\prime (p\circ h)  \mid  h \in \cH\}.
\end{equation}
Given $p\in \mP_1$, and a diffeomorphism $h\in \FS^G_1$ and a function  $p \in \mP_1$, a new function $p_h \in \mP_1$  is generated via the formula 
\begin{equation} \label{GenMod1deq}
  p_h=h^\prime (p\circ h)  
\end{equation}
which is \eqref{GenClass} for  the one-dimensional case. Let $r\in \mP_1$ be a fixed reference and denote by $\widehat p$ the transform of $p \in \mP_1$ as in \eqref {TrprtTrfrm}.  

Under the definitions above, if we assume that the generative model uses a  subgroup  $\cH$ of  $\FS^G_1$ to generate  $\GM_{p,\cH}$, then  any  function in 	$q\in \GM_{p,\cH}$ can be used as a template to  generate $\GM_{p,\cH}$ by transport diffeomorphism from $\cH$, i.e.,  $\GM_{p,\cH}=\GM_{q,\cH}$ for any $q\in \GM_{p,\cH}$. 

Every subgroup  $\cH \subseteq \FS^G_1$ will generate a partition of $\mP_1$. Thus, in principle, the group structure in the generative model allows us to classify the image (functions) in $\mP_1$, hence can be used in data analysis tasks related to classification. 

\begin {proposition}\label {Part} Every subgroup  $\cH \subseteq \FS^G_1$ partition the set $\mP_1$ via the equivalence relation $\sim_{\cH}$ defined by $p\sim_{\cH}q$ if and only if $p\in \GM_{q,\cH}$.   
\end{proposition}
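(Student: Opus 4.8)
The plan is to verify that $\sim_{\cH}$ is an equivalence relation on $\mP_1$; once this is done, the partition of $\mP_1$ into equivalence classes is automatic from the standard correspondence between equivalence relations and partitions, and one checks that the equivalence class of $p$ is exactly $\GM_{p,\cH}$. So the whole content reduces to checking reflexivity, symmetry and transitivity, each of which uses one of the three group axioms on $\cH$.

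First I would record the composition law for the generative action. For $h, g \in \FS^G_1$ and $p \in \mP_1$, a direct computation from \eqref{GenMod1deq} gives $(p_h)_g = g'\cdot(p_h\circ g) = g'\cdot (h'\circ g)\cdot (p\circ h\circ g) = (h\circ g)'\cdot (p\circ (h\circ g)) = p_{h\circ g}$, using the chain rule $(h\circ g)' = (h'\circ g)\cdot g'$ and positivity of derivatives so that absolute values may be dropped (elements of $\FS^G_1$ are increasing $C^1$ diffeomorphisms). Thus the map $h \mapsto (q \mapsto q_h)$ is an anti-action of the group: $(p_h)_g = p_{h\circ g}$. I would also note $p_{\id} = p$ since $\id' \equiv 1$.

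With this identity in hand the three properties are immediate. Reflexivity: $p \sim_{\cH} p$ because $\id \in \cH$ and $p_{\id} = p$, so $p \in \GM_{p,\cH}$. Symmetry: if $p \sim_{\cH} q$ then $p = q_h$ for some $h \in \cH$; since $h^{-1}\in \cH$ by the third group axiom, apply the composition law to get $p_{h^{-1}} = (q_h)_{h^{-1}} = q_{h\circ h^{-1}} = q_{\id} = q$, hence $q \in \GM_{p,\cH}$ and $q \sim_{\cH} p$. Transitivity: if $p \sim_{\cH} q$ and $q \sim_{\cH} s$, then $q = p_g$ and $s = q_h$ for some $g, h \in \cH$, so $s = (p_g)_h = p_{g\circ h}$ with $g\circ h \in \cH$ by closure under composition; thus $s \sim_{\cH} p$, and by symmetry $p \sim_{\cH} s$. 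Finally, the equivalence class of $q$ under $\sim_{\cH}$ is $\{p : p \sim_{\cH} q\} = \{p : p \in \GM_{q,\cH}\} = \GM_{q,\cH}$, so the partition is precisely $\{\GM_{q,\cH} : q \in \mP_1\}$.

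There is no real obstacle here; the only point requiring a little care is that the relation as written, $p \sim_{\cH} q \iff p \in \GM_{q,\cH}$, is phrased asymmetrically, so symmetry of the relation is not a triviality but genuinely needs the inverse axiom together with the composition identity — this is the one step I would write out explicitly rather than leave to the reader. Everything else is bookkeeping with the chain rule.
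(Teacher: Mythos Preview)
Your proof is correct. The paper itself does not supply a proof of this proposition --- it is stated and then immediately used, with the verification of the equivalence-relation axioms left implicit --- so your write-up fills in exactly the details the authors omit. The key computational identity $(p_h)_g = p_{h\circ g}$ that you isolate is the right organizing fact, and your use of each group axiom for the corresponding property of $\sim_{\cH}$ is clean.

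One small presentational point in the transitivity step: from $p \sim_{\cH} q$ you directly get $p = q_g$, not $q = p_g$; you are tacitly invoking the already-proved symmetry to flip this. It is slightly tidier to argue forward: $p = q_g$ and $q = s_h$ give $p = (s_h)_g = s_{h\circ g}$, hence $p \sim_{\cH} s$ without appealing to symmetry. Either way the argument goes through.
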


Often times, if the set of transforms $\widehat{\GM}_{p,\cH}$ is convex, then the solutions of certain estimation problems in transform domain become simple (e.g. linear least squares \cite{Rubaiyat20}). Moreover, this property can also enable easier classification when two disjoint generative classes can be easily separated \cite{shifaterabbi2020radon}. Thus, one of the main goals is to identify conditions under which the set $\widehat{\GM}_{p,\cH}$  of transforms of $\GM_{p,\cH}$ is convex. We have the following theorem.
\begin{theorem} \label{ConvexGroup1dSet}
  Let $\MS\subset \FS^G_1$. Then $\widehat{\GM}_{p,\MS}$ is convex for every $p\in \mP_1$ if and only if $\MS^{-1}:=\{s^{-1}\mid \; s \in \MS \}$ is convex. 
\end{theorem}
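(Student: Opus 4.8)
The plan is to reduce the statement to the \emph{composition property} of the one‑dimensional CDT: $\widehat{p_h}=h^{-1}\circ\widehat p$ for every $h\in\FS^G_1$ and every $p\in\mP_1$. This follows from the defining relation of the CDT (see \eqref{EqivCDTGen}--\eqref{EqivCDT}): the change of variables $u=h(\xi)$ gives $\int_{-\infty}^{t}p_h(\xi)\,d\xi=\int_{-\infty}^{t}h'(\xi)\,p(h(\xi))\,d\xi=\int_{-\infty}^{h(t)}p(u)\,du$ (valid since $h$ is an increasing $C^1$ diffeomorphism of $\R$, so $h(-\infty)=-\infty$), whence $\int_{-\infty}^{\widehat{p_h}(x)}p_h=\int_{-\infty}^{x}r$ becomes $\int_{-\infty}^{h(\widehat{p_h}(x))}p=\int_{-\infty}^{x}r$, which is exactly the relation defining $\widehat p$; hence $h(\widehat{p_h}(x))=\widehat p(x)$, i.e. $\widehat{p_h}=h^{-1}\circ\widehat p$. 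Consequently
\[
  \widehat{\GM}_{p,\MS}=\{\widehat{p_h}:h\in\MS\}=\{h^{-1}\circ\widehat p:h\in\MS\}=\{g\circ\widehat p:g\in\MS^{-1}\}=\Phi_p(\MS^{-1}),
\]
where $\Phi_p(g):=g\circ\widehat p$. The key structural remark is that $\Phi_p$ is \emph{linear} on the vector space of real functions: $(\lambda g_1+(1-\lambda)g_2)\circ\widehat p=\lambda\,(g_1\circ\widehat p)+(1-\lambda)\,(g_2\circ\widehat p)$ pointwise, for every $\lambda\in[0,1]$. So the theorem amounts to: $\MS^{-1}\subset\FS^G_1$ is convex if and only if each image $\Phi_p(\MS^{-1})$ is convex.

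For ``$\MS^{-1}$ convex $\Rightarrow\widehat{\GM}_{p,\MS}$ convex for all $p$'' there is then essentially nothing to do, since the image of a convex set under a linear map is convex; one only notes along the way that a convex combination of increasing $C^1$ diffeomorphisms of $\R$ is again one, so convexity of $\MS^{-1}$ is consistent with $\MS^{-1}\subset\FS^G_1$ and the resulting convex combinations stay in $\widehat{\mP}_1$.

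For the converse, assume $\widehat{\GM}_{p,\MS}=\Phi_p(\MS^{-1})$ is convex for every $p$, fix $g_1,g_2\in\MS^{-1}$ and $\lambda\in[0,1]$, and set $g:=\lambda g_1+(1-\lambda)g_2\in\FS^G_1$. Convexity of $\Phi_p(\MS^{-1})$ gives, for each $p$, some $g_3=g_3(p)\in\MS^{-1}$ with $g_3\circ\widehat p=g\circ\widehat p$ ($\mu_r$-a.e.). The task is to choose $p$'s that make these identities faithful: taking $p=r$ gives $\widehat r=\id$ on $\supp r$, and, for any compact interval $J$, the push‑forward of $r$ under an increasing affine bijection of $\supp r$ onto $J$ (for $r=\chi_{[0,1]}$ this is just the uniform density on $J$) is a legitimate $p\in\mP_1$ whose transform restricted to $\supp r$ is that affine map. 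Hence for every compact $J$ there is $g_3\in\MS^{-1}$ agreeing with the continuous function $g$ on $J$; letting $J$ exhaust $\R$ yields $g\in\MS^{-1}$, so $\MS^{-1}$ is convex.

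I expect the main obstacle to be precisely this converse, and inside it the last passage — from ``$g$ coincides on every compact set with a member of $\MS^{-1}$'' to ``$g\in\MS^{-1}$'' — which is where one must exploit the richness of the family $\{\widehat p:p\in\mP_1\}$ (the CDT realizes restrictions of arbitrary increasing diffeomorphisms) together with a limiting/closure argument. The composition property, the linearity of $\Phi_p$, and therefore the forward implication are routine once that identity is established.
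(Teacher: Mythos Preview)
Your approach --- the composition identity $\widehat{p_h}=h^{-1}\circ\widehat p$, linearity of $g\mapsto g\circ\widehat p$ for the forward direction, and varying $p$ over (affine) push-forwards of $r$ for the converse --- is exactly the paper's; your CDF computation is just the hands-on version of the push-forward argument the paper gives for Lemma~\ref{lm: 1d_composition_prop}, and the forward implication is identical.

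The step you single out as the obstacle really is a gap, and the limiting/closure argument you propose cannot close it without an extra hypothesis on $\MS$. Because every $p\in\mP_1$ has compact support, the essential range of each $\widehat p$ is contained in a compact set, so convexity of all the $\widehat{\GM}_{p,\MS}$ yields only that every restriction set $\MS^{-1}|_J:=\{g|_J:g\in\MS^{-1}\}$ (for $J\subset\R$ compact) is convex. That does not force $\MS^{-1}$ itself to be convex. For instance, let $\MS^{-1}=A\cup B$ where $A$ is the set of elements of $\FS^G_1$ equal to $\id$ outside a compact set and $B$ is the set of elements equal to $\id$ for $x\le -N$ and to $x\mapsto x+1$ for $x\ge N$ (some $N$ depending on the element). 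On any compact $J$, both $A|_J$ and $B|_J$ coincide with the full set of increasing $C^1$ maps on $J$ with positive derivative (any such map extends to an element of $A$ and also to an element of $B$), so $(A\cup B)|_J$ is convex and hence $\widehat{\GM}_{p,\MS}$ is convex for every $p\in\mP_1$; yet $\tfrac12(\id+b)$ for any $b\in B$ has asymptotic shift $\tfrac12$ at $+\infty$ and lies in neither $A$ nor $B$, so $\MS^{-1}$ is not convex. Thus ``letting $J$ exhaust $\R$'' cannot produce $g\in\MS^{-1}$ in general, and the converse as stated does not follow. The paper's one-line ``by varying $p$ (e.g., by choosing $p$ as translations of $r$)'' is the same argument you give, and carries the same gap.
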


\begin{corollary} \label {ConvexGroup1d}
  Let $\cH\subset \FS^G_1$ be a group. Then $\widehat{\GM}_{p,\cH}$ is convex for every $p\in \mP_1$ if and only if $\cH$ is convex. 
\end{corollary}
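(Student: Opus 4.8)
The plan is to obtain Corollary \ref{ConvexGroup1d} as an immediate consequence of Theorem \ref{ConvexGroup1dSet}, the only extra ingredient being that set-theoretic inversion acts trivially on a group. First I would spell out the group hypothesis: since $\cH\subseteq\FS^G_1$ is a subgroup under composition, it contains $\id$ and is closed under $h\mapsto h^{-1}$; hence the inversion map $\iota\colon\cH\to\cH$, $\iota(h)=h^{-1}$, is a well-defined involution, in particular a bijection of $\cH$ onto itself. Consequently $\cH^{-1}:=\{h^{-1}\mid h\in\cH\}$ equals $\cH$ as a subset of $\FS^G_1$.

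With this identification the corollary drops out: applying Theorem \ref{ConvexGroup1dSet} with $\MS=\cH$, the set $\widehat{\GM}_{p,\cH}$ is convex for every $p\in\mP_1$ if and only if $\cH^{-1}$ is convex, and since $\cH^{-1}=\cH$ this is the same as saying $\cH$ is convex. (Convexity is meant in the ambient real vector space of functions $\R\to\R$; note that a convex combination of two increasing $C^1$ bijections of $\R$ is again one, so ``$\cH$ convex'' is a meaningful condition, and it is the same notion already used in the statement of Theorem \ref{ConvexGroup1dSet}.)

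There is essentially no obstacle in this argument — the work is all in Theorem \ref{ConvexGroup1dSet}. The one point worth flagging is that for a general subset $\MS\subseteq\FS^G_1$ the convexity of $\MS$ and of $\MS^{-1}$ are genuinely distinct conditions, since inversion of maps does not respect convex combinations; so the corollary really does use the group hypothesis through the identity $\cH^{-1}=\cH$, and cannot be read off from a version of Theorem \ref{ConvexGroup1dSet} phrased in terms of $\MS$ rather than $\MS^{-1}$. For completeness one might also note that $\FS^G_1$ is itself closed under composition and inversion, hence a group, so that the phrase ``subgroup $\cH\subseteq\FS^G_1$'' is legitimate, although this is already implicit in the text.
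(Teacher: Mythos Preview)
Your argument is correct and is exactly the intended one: the paper states Corollary \ref{ConvexGroup1d} immediately after Theorem \ref{ConvexGroup1dSet} without a separate proof, precisely because for a group $\cH$ one has $\cH^{-1}=\cH$ and the corollary is then a direct restatement of the theorem. Your additional remarks (that inversion does not preserve convexity for general $\MS$, so the group hypothesis is genuinely used) are accurate and helpful commentary.
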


Obviously $\FS^G_1$ is itself a convex group. Thus, it partitions $\widehat \mP_1$ into equivalent classes that are convex. Two equivalent classes $\widehat S_{p,\FS^G_1}, \;\widehat S_{q,\FS^G_1}$ are distinct if the $\supp (p)$ and $\supp (q)$ are topologically distinct (i.e., non-homeomorphic). Subgroups of $\FS^G_1$ will further partition each  class $\widehat S_{p,\FS^G_1}$ into sub-classes and so on. Even when a generative model $\cH \subset \FS^G_1 $ is a subset but not a subgroup of $\FS^G_1$, it is still true that $\text{conv}(\widehat S_{p,\cH})\cap \text{conv}(\;\widehat S_{q,\cH})=\emptyset$ if $\supp (p)$ and $\supp (q)$ are topologically distinct. An illustration is presented in Figure \ref{fig:LDA_1d}, 
where Linear Discriminant Analysis (LDA) \cite{fisher1936} is applied to two generative classes and their corresponding CDTs in the transform domain. The two signal classes are generated with templates (one-bump characteristic function $p_1$ and two-bump characteristic function $p_2$) shown in Figure \ref{fig:gen_classesv2}  and a set $\cH$ of 500 randomly generated fifth degree polynomials with certain constraints on the coefficients\footnote{The coefficients of the polynomials are chosen so that the supports of $h^{\prime}(p_1\circ h)$ and $h^{\prime}(p_2\circ h)$ are inside the interval $[0,1]$ for all $h\in \cH$, see Figure \ref{fig:gen_classesv2} for some randomly chosen samples from each class.}. It can be seen from Figure \ref{fig:LDA_1d} that the transformed signal classes $\widehat S_{p_1,\cH} $ and $\widehat S_{p_2,\cH}$ (right) are much better separated than the original signal classes $S_{p_1,\cH}$ and $S_{p_2,\cH}$ (left). Moreover, as predicted by our theory above, $\textrm{conv}(\widehat S_{p_1,\cH})$ and $\textrm{conv}(\widehat S_{p_2,\cH})$ are indeed disjoint. This property of the CDT makes it well-adapted for many applications of data analysis, processing and classification. Thus, one of our goals is to understand the structure of the convex subgroups of $\FS^G_1$.
\begin{figure}[!hbt]
    \centering
    \includegraphics[width=14cm]{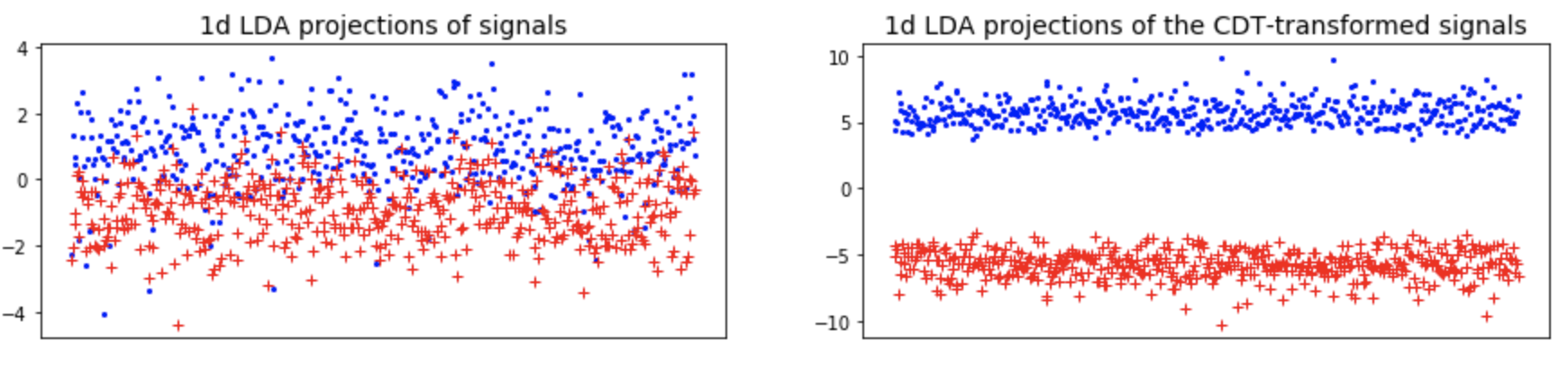}
    \caption{LDA projections of signals in two signal classes and their CDTs. Left panel: LDA projection of 500 signals generated from the top left signal in Figure \ref{fig:gen_classesv2}, and the bottom left signal in Figure \ref{fig:gen_classesv2} respectively. Right panel: LDA projection of the CDT transforms of the 500 signals from each class. Note that the horizontal axes here are dummy axes, which indicate the counting indexes (ranging from $1$ to $500$) of the samples.} 
    \label{fig:LDA_1d}
\end{figure}

\begin{figure}[!hbt]
    \centering
    \includegraphics[width=14cm]{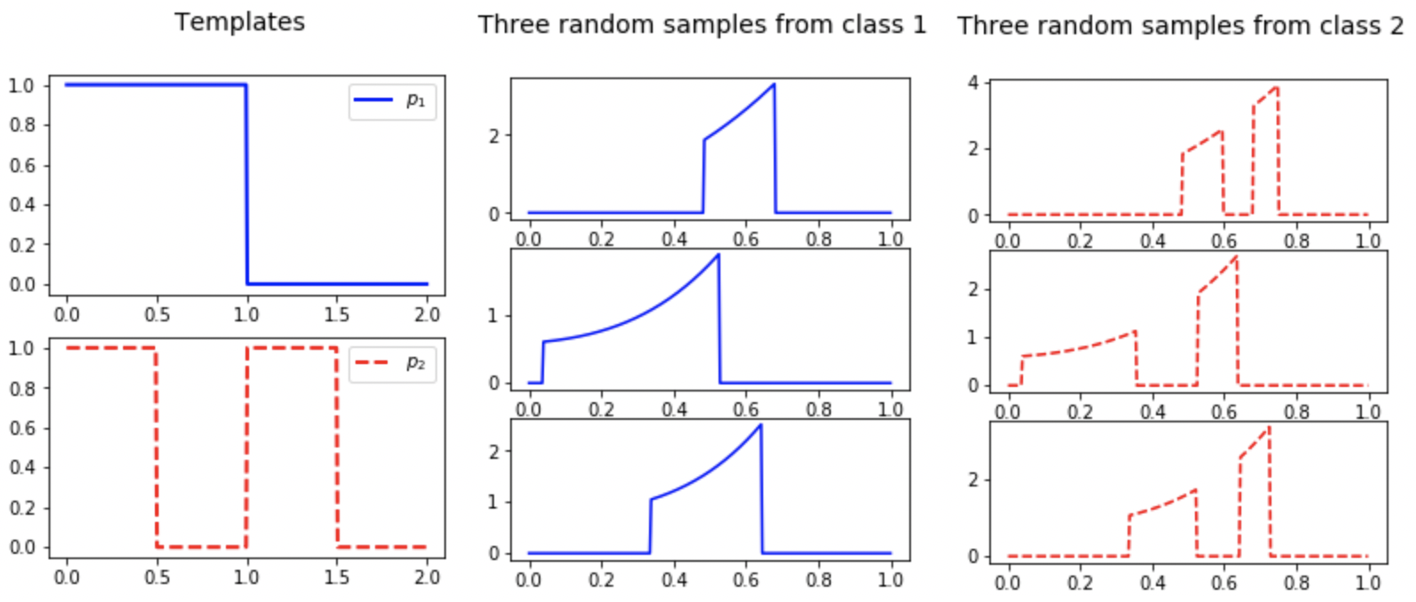}
    \caption{Templates and samples for two generative classes. Right panel: Signal one (blue), signal two (red). Middle panel: Three signals generated by a generating set of diffeomorphisms of top signal in left panel. Right panel: Three signals generated by a generating set of diffeomphisms of bottom signal in left panel. }
    \label{fig:gen_classesv2}
\end{figure}

\subsection {Examples of convex subgroups of \texorpdfstring{$\FS^G_1$}{FSG1}}\label{subsec: 1dConvGrp}
    Note that not every subgroup of $\FS^G_1$ is convex, for example, the group generated by the integer translation diffeomorphisms $f \in \FS^G_1$, i.e., $\{h_{i}\in \FS^G_1 \mid h_{i}(x)= x-i, i\in \mathbb Z\}$. {In addition, there are convex sets of transformations that are relevant for applications but that may not form a group, e.g., the set of polynomials up to degree two. In particular, compositions of quadratic polynomials are not quadratic and a  polynomial of degree two, is not invertible in $\R$. However, for certain applications, requiring the  transformation to be invertible in a restricted domain is enough \cite{Rubaiyat20}. For example, a quadratic polynomial $f(t)$ where $t$ represents time  may arise in the radar motion estimation problem when the car is not moving with a constant velocity, which corresponds to the transport transformation between the source signal and the received signal. In this case, $f(t)$ only needs to be invertible when restricted  to the positive real line. Though the space of polynomials up to degree two is not a convex group, with a fixed source signal,  the model can be used to include all variations of received signals  under time delays, linear and quadratic dispersions of the source signal.  Nevertheless, the convexity of the transformed signal class is guaranteed to be convex so long as  $\cH^{-1}$ is  convex (cf. Theorem \ref{ConvexGroup1dSet}). This property allows one to find fast and practical solutions to  estimation problems when $\cH^{-1}$ is the space  of polynomials in \cite{Rubaiyat20} via a simple linear least squares procedure in transform domain. On the other hand, certain data classes (e.g.,  the MNIST data set) are more appropriately modelled using a set $\cH$ with a group structure. In particular, if any signal in a data class $\GM_{p,\cH}$ can be as good a template as any other one in the class, it makes sense to assume that compositions and inverses of transformations in $\cH$ remain in $\cH$,  indicating that $\cH$ is indeed a group.} Examples of convex subgroups of $\FS^G_1$ are given below.
    
    \begin{example} \label{exsubgrp} ${}$
     \begin {enumerate}
\item $\{\id\}$ is a convex subgroup of $\FS_1^G$. 
\item $\{\alpha \id\mid \alpha >0\}$ is a convex subgroup of $\FS^G_1$ which is also a cone. {As a simple example, this group can be used to model linear dispersion in acoustic or radar signals \cite{Rubaiyat20}.}
\item Let $\Tr_1=\{h_\mu\in \FS^G_1\mid h_\mu(x)=x-\mu, \mu \in \R\}$. The set of all translation functions is a convex subgroup of $\FS^G_1$ but not a cone. {As a simple example, this group can be used to model time delay in a signal class \cite{nichols2019time}.}
\item Let $\Aff_1=\{h_{\alpha,\mu}\in \FS^G_1\mid h_{\alpha,\mu}(x)=\alpha x-\mu, \alpha >0,  \mu \in \R\}$. The set of all increasing affine functions is a convex subgroup of  $\FS^G_1$ and a cone. {This group can be used to model both time delay and linear dispersion in a signal class \cite{Rubaiyat20}.}
\item Let $x_0\in \R$ and consider the set $\FS_{x_0}=\{f\in \FS^G_1\mid f(x_0)=x_0\}$. Then $\FS_{x_0}$ is a convex subgroup of $\FS^G_1$. It is also a cone.
\item Let $\Omega$ define a closed interval in $\mathbb{R}$. Then the set $\FS_\Omega = \{f \in \mathcal{F}_1 \mid f(y) = y \; \forall y \in \Omega \}$ is a convex subgroup of $\FS^G_1$.
\end {enumerate}   
    \end{example}
\begin {remark} 
Let $C_1,C_2 \subset \FS^G_1$ be convex subgroups of $\FS^G_1$. Then it is not difficult to see that if $C_1\cap C_2\neq \emptyset$, then $C_1\cap C_2$ is a convex subgroup of $\FS^G_1$. In fact, let $\{C_\alpha\mid \alpha \in A\}$ be a family of convex subgroups. If $\bigcap\limits_{\alpha\in A} C_\alpha \neq \emptyset$, then $\bigcap\limits_{\alpha\in A} C_\alpha$  is a convex subgroup of $\FS^G_1$. 

\end{remark}
 The following proposition shows that the set of subgroups of $\FS^G_1$ is sufficiently rich.
\begin{proposition} \label {InfSubgrp}
There are uncountably many distinct  convex subgroups of $\FS^G_1$.
\end{proposition}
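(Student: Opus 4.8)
The plan is to produce an explicit uncountable family of convex subgroups of $\FS^G_1$ and then verify that its members are pairwise distinct. The natural family is the one already isolated in Example~\ref{exsubgrp}(5): for each $x_0\in\R$ put $\FS_{x_0}=\{f\in\FS^G_1\mid f(x_0)=x_0\}$, the stabilizer of the point $x_0$. By that example, each $\FS_{x_0}$ is a convex subgroup (indeed a cone) of $\FS^G_1$, so the convex-subgroup property requires no further work; the entire content of the proof is to show that the assignment $x_0\mapsto\FS_{x_0}$ is injective.

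For injectivity I would fix $x_0\neq x_1$ in $\R$ and exhibit a member of $\FS_{x_0}$ that is not a member of $\FS_{x_1}$. Take the increasing affine map $g_\alpha(x)=x_0+\alpha(x-x_0)$ with $\alpha>0$, $\alpha\neq 1$. Writing $g_\alpha=h_{\alpha,\mu}$ with $\mu=(\alpha-1)x_0$ shows $g_\alpha\in\Aff_1\subset\FS^G_1$ by Example~\ref{exsubgrp}(4), and $g_\alpha(x_0)=x_0$, so $g_\alpha\in\FS_{x_0}$. On the other hand $g_\alpha(x_1)=x_1+(\alpha-1)(x_1-x_0)\neq x_1$, since $\alpha\neq 1$ and $x_1\neq x_0$; hence $g_\alpha\notin\FS_{x_1}$ and therefore $\FS_{x_0}\neq\FS_{x_1}$.

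Since $x_0\mapsto\FS_{x_0}$ is then an injection from the uncountable set $\R$ into the collection of convex subgroups of $\FS^G_1$, that collection is uncountable, which is the assertion. There is essentially no obstacle here: once Example~\ref{exsubgrp}(5) is granted, the proof reduces to the one-line distinctness check above. (One could equally well use the family $\{\FS_\Omega\}$ of Example~\ref{exsubgrp}(6) indexed by closed intervals $\Omega\subset\R$, with the same distinctness argument; the point-stabilizer family is just the most economical choice.)
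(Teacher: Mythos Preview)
Your proof is correct and follows essentially the same route as the paper: both use the point-stabilizer family $\{\FS_{x_0}\}_{x_0\in\R}$ from Example~\ref{exsubgrp}(5) and verify injectivity by exhibiting, for $x_0\neq x_1$, an element of $\FS_{x_0}\setminus\FS_{x_1}$. If anything, your version is slightly more explicit, since you write down the separating diffeomorphism concretely as the affine map $g_\alpha(x)=x_0+\alpha(x-x_0)$, whereas the paper merely asserts such a map exists.
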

By examining the proof of the proposition \ref {InfSubgrp}, we get the following corollary.
\begin {corollary}
Let $I$ be an indexing set and let $X=\{x_\alpha \subset \R: \alpha \in I\}$. Then 
\begin {itemize} 
\item The set $\FS_X:=\{f \in \FS^G_1\mid f(x_\alpha)=x_\alpha, \alpha \in I\}$ is convex subgroup of $\FS^G_1$.
\item If $X_1\subsetneq X_2$, then $\FS_{X_2} \subsetneq \FS_{X_1}$.
\end {itemize}
\end {corollary}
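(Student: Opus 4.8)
The plan is to treat the two bullets separately; the first is a direct extension of items (5)--(6) of Example~\ref{exsubgrp} to an arbitrary family of fixed points, and the second is where the actual construction lies.

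\emph{First bullet.} Recall that $\FS^G_1$ is itself a group (closed under composition and inversion) consisting exactly of the increasing $C^1$ diffeomorphisms of $\R$ onto $\R$. For convexity, given $f,g\in\FS_X$ and $t\in[0,1]$ put $h=tf+(1-t)g$: then $h$ is $C^1$ with $h'=tf'+(1-t)g'>0$ everywhere, and $h(x)\to\pm\infty$ as $x\to\pm\infty$ since $f,g$ are increasing bijections of $\R$, so $h\in\FS^G_1$; also $h(x_\alpha)=tx_\alpha+(1-t)x_\alpha=x_\alpha$ for every $\alpha$, hence $h\in\FS_X$. The group axioms are equally immediate: $\id\in\FS_X$; if $f,g\in\FS_X$ then $f\circ g\in\FS^G_1$ and $(f\circ g)(x_\alpha)=f(x_\alpha)=x_\alpha$; and if $f\in\FS_X$ then $f^{-1}\in\FS^G_1$ with $f^{-1}(x_\alpha)=x_\alpha$ (apply $f^{-1}$ to $f(x_\alpha)=x_\alpha$). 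So $\FS_X$ is a convex subgroup of $\FS^G_1$, by the same reasoning that handled a single fixed point in Example~\ref{exsubgrp}.

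\emph{Second bullet.} If $X_1\subseteq X_2$ then every map fixing all of $X_2$ fixes all of $X_1$, so $\FS_{X_2}\subseteq\FS_{X_1}$; the content is that this inclusion is strict when $X_1\subsetneq X_2$. Choose $x^*\in X_2\setminus X_1$ and $\varepsilon>0$ with $U:=(x^*-\varepsilon,x^*+\varepsilon)$ disjoint from $X_1$ (possible as soon as $x^*$ is not a limit point of $X_1$, e.g. when $X_1$ is closed, which is the setting relevant here and in the proof of Proposition~\ref{InfSubgrp}). Fix a $C^1$ function $\psi:\R\to\R$ supported in $(-1,1)$ with $\psi(0)\ne0$ (e.g. $\psi(t)=(1-t^2)^2$ on $[-1,1]$ and $0$ outside) and define
\[
 f(x)=x+\delta\,\psi\big((x-x^*)/\varepsilon\big),
\]
with $\delta>0$ small enough that $f'(x)=1+(\delta/\varepsilon)\,\psi'\big((x-x^*)/\varepsilon\big)>0$ for all $x$ (any $\delta<\varepsilon/(1+\norm{\psi'}_\infty)$ works). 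Then $f$ is $C^1$ with $f'>0$ and equals $\id$ outside $U$, hence $f$ is an increasing $C^1$ diffeomorphism of $\R$, i.e. $f\in\FS^G_1$; since $U\cap X_1=\emptyset$ we have $f\equiv\id$ on $X_1$, so $f\in\FS_{X_1}$; but $f(x^*)=x^*+\delta\psi(0)\ne x^*$, so $f\notin\FS_{X_2}$. Therefore $\FS_{X_2}\subsetneq\FS_{X_1}$.

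The only genuine obstacle is this last construction, and within it the one delicate point is securing the separating neighbourhood $U$ with $U\cap X_1=\emptyset$; once $U$ is in hand, the localized bump $f$ above settles the matter and all remaining checks ($C^1$, strictly increasing, onto $\R$, identity on $X_1$, moving $x^*$) are routine. If one wants the second bullet for a completely arbitrary $X_1$, the clean remedy is to phrase it with closures (or with $x^*\notin\overline{X_1}$), which in particular covers the separated families $\{x_\alpha\}$ used to derive Proposition~\ref{InfSubgrp}.
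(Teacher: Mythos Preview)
Your treatment of the first bullet is correct and matches the paper's implicit argument (the paper gives no proof of the corollary beyond ``by examining the proof of Proposition~\ref{InfSubgrp}'', which in turn just asserts the existence of suitable functions and refers back to Example~\ref{exsubgrp}(5)). Your explicit bump-function construction for the second bullet is more detailed than anything in the paper and is essentially the right idea.

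You have also correctly identified a genuine issue that the paper glosses over: the second bullet, as stated for arbitrary subsets $X_1\subsetneq X_2$ of $\R$, is false. For instance, take $X_1=\mathbb{Q}$ and $X_2=\R$; any continuous $f$ fixing every rational is the identity, so $\FS_{X_1}=\{\id\}=\FS_{X_2}$ despite $X_1\subsetneq X_2$. Your construction needs $x^*\in X_2\setminus\overline{X_1}$, and you are right to flag this. The paper's intended use (deriving Proposition~\ref{InfSubgrp} from single-point sets, or more generally discrete/closed $X$) is covered by your argument, and your suggested remedy---replace $X_i$ by $\overline{X_i}$ in the hypothesis, or require $X_2\setminus\overline{X_1}\ne\emptyset$---is exactly what is needed to make the statement correct in general. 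So your proof is sound for the version of the claim that is actually true, and you have been more careful than the paper here.
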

\begin{remark}
    The results above are useful for the one-dimensional transform CDT and for the multi-dimensional R-CDT transforms. {In particular, when signal classes conform to algebraic generative models using any of the convex groups $\cH$ above, Theorem \ref{ConvexGroup1dSet} will guarantee that the signal classes are convex in the transform domain. This convexity property and the one-to-one correspondence between signals and their transforms can facilitate data classification and signal estimation problems. In particular, data classes that are disjoint in signal domain remain disjoint in transform domain.}
\end{remark}
\subsection{Proofs of Section \ref {Sec: oneD}}
\subsubsection{Proof of Theorem \ref{ConvexGroup1dSet}}
We start by proving the following lemma:

\begin{lemma}\label{lm: 1d_composition_prop}
 	Let $p \in \mP_1$,  then 
	\begin{equation}\label{eq: transport_commute_1}
		\widehat p_h = h^{-1}\circ \widehat p, \quad \forall h\in \FS^G_1
	\end{equation}
	where $\widehat p$ denotes the CDT with respect to the reference $r$.
\end{lemma}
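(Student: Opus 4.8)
The plan is to show that $g:=h^{-1}\circ\widehat p$ is \emph{the} optimal transport map from the reference $r$ to $p_h$, and then invoke uniqueness to conclude $g=\widehat{p_h}$. By the converse direction of Brenier's Theorem (Theorem \ref{thmBrenier}), it suffices to verify three things: (i) $g\in\FS_1$, i.e.\ $g$ is non-decreasing a.e.\ $\mu_r$; (ii) $\int_\R|g(x)|^2 r(x)\,dx<\infty$; and (iii) $\mu_{p_h}=g_\#\mu_r$.

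Property (i) is immediate: $h\in\FS^G_1$ is an increasing $C^1$ diffeomorphism of $\R$, hence so is $h^{-1}$, and $\widehat p\in\FS_1$ is non-decreasing a.e.\ $\mu_r$, so the composition $g$ is non-decreasing a.e.\ $\mu_r$. Property (ii) follows because $p_h$ and $r$ have compact support: $\widehat p$ is essentially bounded on $\supp(r)$ and $h^{-1}$ carries bounded sets to bounded sets, so $g$ is essentially bounded on $\supp(r)$. The real content is (iii). Here I would start from the fact that $\widehat p$ already transports $\mu_r$ to $\mu_p$, so that $g_\#\mu_r=(h^{-1})_\#\widehat p_\#\mu_r=(h^{-1})_\#\mu_p$, and then check $(h^{-1})_\#\mu_p=\mu_{p_h}$ by the change of variables $y=h(x)$ (recall $h'>0$): for every measurable $B\subset\R$,
\[
(h^{-1})_\#\mu_p(B)=\int_{h(B)}p(y)\,dy=\int_{B}p\big(h(x)\big)\,h'(x)\,dx=\int_{B}p_h(x)\,dx=\mu_{p_h}(B).
\]

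With (i)--(iii) established, Brenier's Theorem identifies $g$ with the optimal transport map from $\mu_r$ to $\mu_{p_h}$, which by definition is $\widehat{p_h}$, giving \eqref{eq: transport_commute_1} (a.e.\ $\mu_r$, as always). A shorter route special to $d=1$ is to argue directly with cumulative distributions: the same change of variables gives $F_{p_h}=F_p\circ h$ where $F_g(t):=\int_{-\infty}^{t}g$ (using that the increasing bijection $h$ satisfies $h(\pm\infty)=\pm\infty$), and then the characterization of the CDT as a generalized inverse, \eqref{EqivCDTGen}, yields $\widehat{p_h}=(F_{p_h})^{-1}\circ F_r=(F_p\circ h)^{-1}\circ F_r=h^{-1}\circ(F_p)^{-1}\circ F_r=h^{-1}\circ\widehat p$. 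The only genuinely delicate point in either approach is the measure-theoretic bookkeeping --- the ``$\widehat p$ may take the value $+\infty$ on a $\mu_r$-null set'' caveat attached to \eqref{EqivCDTGen}, and the behavior of generalized inverses on flat pieces or jumps of $F_p$ --- but compact support makes all of this routine, so I do not expect a substantive obstacle.
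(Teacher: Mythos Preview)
Your proposal is correct and follows essentially the same route as the paper: both arguments verify a candidate map is non-decreasing, square-integrable against $\mu_r$, and pushes $\mu_r$ to the right target, then appeal to the uniqueness in Brenier's Theorem. The only cosmetic difference is direction---the paper shows $h\circ\widehat{p_h}$ is the optimal map from $r$ to $p$ (hence equals $\widehat p$) and then inverts, whereas you show directly that $h^{-1}\circ\widehat p$ is the optimal map from $r$ to $p_h$; your alternative CDF argument via \eqref{EqivCDTGen} is a nice $1$D-specific shortcut that the paper does not include.
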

\begin{proof}
Let $r\in \mP_1$, and $\widehat p, \widehat p_h \in \FS_1$ be the optimal transports from $r$ to $p, p_h$ respectively.  We have that by definition $\mu_p=\widehat p_\#\mu_r$, $\mu_p = h_{\#}\mu_{p_h}$ and that  $\mu_{p_h}=(\widehat p_h)_\#\mu_{r}.$ Using the well-known relation $(S\circ T)_\#\mu_q=S_\#(T_\#\mu_q)$ for any maps $S,T$ and the fact that  $h\circ \widehat p_h \in \FS_1$ (since both $h$ and $\widehat p$ are non-decreasing) and the property that $h\circ \widehat p_h$ is square-integrable with respect to $\mu_p$, we conclude that $h\circ \widehat p_h=\widehat p$. It follows that $ \widehat p_h=h^{-1}\circ \widehat p$ for all $h\in \FS^G_1$.
\end{proof}

\begin{proof}[Proof of Theorem \ref{ConvexGroup1dSet}]
Assume that $\MS^{-1}$ is convex. Then for $p_{h_1}, p_{h_2} \in \GM_{p,S}$, and $0\le \alpha\le 1$, we have 
\[
\begin{split}
 \alpha \widehat p_{h_1}(x)+(1-\alpha)\widehat p_{h_2}(x)=&\alpha h_1^{-1}\circ \widehat p(x)+(1-\alpha) h_2^{-1}\circ \widehat p(x)\\
 =&\alpha h_1^{-1}\big(\widehat p(x)\big)+(1-\alpha) h_2^{-1}\big(\widehat p(x)\big) \\
 =&\big(\alpha h_1^{-1} +(1-\alpha) h_2^{-1}\big)\circ \widehat p (x).
\end{split}
\]
Thus, $\widehat \GM_{p,\MS}$ is convex. 

For the converse statement, assume that $\widehat \GM_{p,\MS}$ is convex, i.e.,   $\big(\alpha h_1^{-1} +(1-\alpha) h_2^{-1}\big)\circ \widehat p\in \widehat \GM_{p,\MS}$ for all $h_1, h_2\in \MS$.  Since $\widehat \GM_{p,\MS}$ is convex for every $p\in \mP_1$, by varying $p$ (e.g., by choosing p as translations of $r$) one can conclude that $\alpha h_1^{-1} +(1-\alpha) h_2^{-1}\in \MS$ for all $h_1, h_2\in \MS$. 

\end{proof}
\subsubsection{Proof of Proposition \ref {InfSubgrp}}
\begin{proof}[Proof of Proposition \ref {InfSubgrp}]
  Let $x_0, x_1\in \R$ with $x_0\ne x_1$. Choose $f,g \in \FS^G_1$ such that $f(x_0)=x_0, f(x_1)\ne x_0$, and  $g(x_0)\ne x_0, g(x_1)=x_1$. Using the notation of (5) in Example \eqref{exsubgrp}, we get $f\in \FS_{x_0}$ but $f\ne \FS_{x_1}$ and similarly, $g\in \FS_{x_1}$ but $g\ne \FS_{x_0}$. Extending the previous argument over an uncountable set $X=\{x_i\mid i \in I\}\subset \R$  provides an uncountable set $\{\FS_{x_i}\}$ of distinct subgroups of $\FS^G_1$. 
 \end {proof}
 
\section{lot and generative models in multi-dimensions }\label{Sec: multi-D}
 As in the one-dimensional case, a  group $\cH\subseteq \FS_d^G$ partitions the set $\mP_d$ into equivalent classes (as in Proposition \eqref {Part}) that are useful for classification problems. However, unlike the one-dimensional case, Equation \eqref {eq: transport_commute_1} does not hold in general. Thus, in order to obtain convexity of $\widehat{\GM}_{p,\MS}$ from the convexity of the set ${\MS}^{-1} $ as in Theorem \ref{ConvexGroup1dSet}, we need to find conditions on $\MS$ such that for the given $p
 \in \mP_d$  the equation 
 \begin {equation} \label{eq:d-comm}
 	\widehat p_h = h^{-1}\circ \widehat p
 \end {equation}
 holds for all $h\in \MS$ and hence to generate convex subsets of $\widehat \mP_d$ when ${\MS}^{-1}$ is convex. In particular we have
 {\begin{theorem}\label{Thm: setmD}
	Let $p\in \mP_d$ and $\mathcal W_p \subset \FS_d^G$ be a set. If  \eqref{eq:d-comm} holds for all $h\in \mathcal W_p $, and  $\mathcal W_p^{-1}$ is a convex set, then $\widehat S_{p,\mathcal W_p}$ is also convex.
\end{theorem}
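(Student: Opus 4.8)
The plan is to imitate the sufficiency (``easy'') direction of Theorem \ref{ConvexGroup1dSet}, the only structural difference being that the composition identity \eqref{eq:d-comm}, which held automatically in dimension one by Lemma \ref{lm: 1d_composition_prop}, is now an explicit hypothesis rather than a free fact. First I would record the set-theoretic reformulation
\[
\widehat S_{p,\mathcal W_p}=\{\widehat p_h \mid h\in \mathcal W_p\}=\{h^{-1}\circ\widehat p\mid h\in \mathcal W_p\}=\{g\circ\widehat p\mid g\in \mathcal W_p^{-1}\},
\]
where the middle equality is precisely \eqref{eq:d-comm} applied to each $h\in\mathcal W_p$, and the last equality is the definition of $\mathcal W_p^{-1}$.

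Next I would note that right-composition with the fixed map $\widehat p$ is a linear operation on functions $\R^d\to\R^d$: for any $g_1,g_2$ and $\alpha\in[0,1]$,
\[
\big(\alpha g_1+(1-\alpha)g_2\big)\circ\widehat p=\alpha\,(g_1\circ\widehat p)+(1-\alpha)\,(g_2\circ\widehat p)
\]
pointwise on $\R^d$. Hence $\widehat S_{p,\mathcal W_p}$ is the image of the convex set $\mathcal W_p^{-1}$ under this linear map, and the image of a convex set under a linear map is convex; that is the claim. At the level of elements: given $\widehat p_{h_1},\widehat p_{h_2}\in\widehat S_{p,\mathcal W_p}$ and $\alpha\in[0,1]$, set $g:=\alpha h_1^{-1}+(1-\alpha)h_2^{-1}$; convexity of $\mathcal W_p^{-1}$ gives $g\in\mathcal W_p^{-1}$, so $g=h_3^{-1}$ for some $h_3\in\mathcal W_p$, and then
\[
\alpha\widehat p_{h_1}+(1-\alpha)\widehat p_{h_2}=\big(\alpha h_1^{-1}+(1-\alpha)h_2^{-1}\big)\circ\widehat p=h_3^{-1}\circ\widehat p=\widehat p_{h_3}\in\widehat S_{p,\mathcal W_p},
\]
where the first equality is \eqref{eq:d-comm} for $h_1,h_2$ and the last is \eqref{eq:d-comm} for $h_3$.

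There is essentially no analytic obstacle here once \eqref{eq:d-comm} is granted; the only points that deserve a sentence of care are (i) that the convex combination $g$ is automatically of the form $h_3^{-1}$ with $h_3\in\mathcal W_p$, so that the composition property is available for it, and (ii) that no separate verification that $h_3^{-1}\circ\widehat p$ lies in $\FS_d$ is needed, since \eqref{eq:d-comm} already identifies it with the genuine LOT transform $\widehat p_{h_3}\in\widehat S_{p,\mathcal W_p}$. Unlike the one-dimensional Theorem \ref{ConvexGroup1dSet}, I would not attempt a converse: without \eqref{eq:d-comm} the displayed computation collapses, and Theorem \ref{thm: nD_group} shows that requiring \eqref{eq:d-comm} for all $p$ already forces $\cH$ into a very restricted family — which is exactly why the present statement is phrased as a one-directional, $p$-dependent sufficient condition.
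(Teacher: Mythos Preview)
Your proposal is correct and matches the paper's approach exactly: the paper does not spell out a separate proof for Theorem~\ref{Thm: setmD} but treats it as the forward direction of Theorem~\ref{ConvexGroup1dSet} with Lemma~\ref{lm: 1d_composition_prop} replaced by the hypothesis~\eqref{eq:d-comm}, which is precisely what you do. Your extra remarks (i) and (ii) about why the convex combination $g$ inherits the composition identity and why no independent check that $h_3^{-1}\circ\widehat p\in\FS_d$ is needed are apt and, if anything, make the argument tighter than the paper's implicit one.
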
}

As a corollary, when $\mathcal W_p$ is a group, we have
 \begin{theorem}\label{Thm: ConvGroupmDp}
	Let $p\in \mP_d$ and $\cH_p \subset \FS_d^G$ be a group. If  \eqref{eq:d-comm} holds for all $h\in \cH_p $, and  $\cH_p$ is a convex set, then $\widehat S_{p,\cH_p}$ is also convex.
\end{theorem}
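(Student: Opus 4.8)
The plan is to obtain Theorem \ref{Thm: ConvGroupmDp} as a direct corollary of Theorem \ref{Thm: setmD}. The only thing to notice is that a group equals its own set of inverses: if $\cH_p$ is a group then the group axioms give $h^{-1}\in\cH_p$ for every $h\in\cH_p$, so $\cH_p^{-1}\subseteq\cH_p$, while each $h\in\cH_p$ equals $(h^{-1})^{-1}$ with $h^{-1}\in\cH_p$, so $\cH_p\subseteq\cH_p^{-1}$; hence $\cH_p^{-1}=\cH_p$. Thus the hypothesis that $\cH_p$ is convex is exactly the hypothesis that $\cH_p^{-1}$ is convex, and since \eqref{eq:d-comm} is assumed to hold for all $h\in\cH_p$, Theorem \ref{Thm: setmD} applies verbatim with $\mathcal W_p=\cH_p$ and yields that $\widehat S_{p,\cH_p}$ is convex.

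For completeness I would also record the short argument behind Theorem \ref{Thm: setmD} itself, which is where the content lies and which parallels the one-dimensional proof via Lemma \ref{lm: 1d_composition_prop}. Fix $p$, pick $p_{h_1},p_{h_2}\in\GM_{p,\mathcal W_p}$ and $\alpha\in[0,1]$, and use \eqref{eq:d-comm} to write
\[
\alpha\,\widehat p_{h_1}+(1-\alpha)\,\widehat p_{h_2}=\alpha\,h_1^{-1}\circ\widehat p+(1-\alpha)\,h_2^{-1}\circ\widehat p=\big(\alpha h_1^{-1}+(1-\alpha)h_2^{-1}\big)\circ\widehat p .
\]
Convexity of $\mathcal W_p^{-1}$ gives $\alpha h_1^{-1}+(1-\alpha)h_2^{-1}=g^{-1}$ for some $g\in\mathcal W_p$, and applying \eqref{eq:d-comm} once more (now to $g$) identifies the right-hand side with $g^{-1}\circ\widehat p=\widehat p_g\in\widehat S_{p,\mathcal W_p}$. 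Hence the convex combination of any two transforms in $\widehat S_{p,\mathcal W_p}$ again lies in $\widehat S_{p,\mathcal W_p}$, which is the claim.

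I do not expect a genuine obstacle in this corollary: the passage from Theorem \ref{Thm: setmD} is pure bookkeeping with the group axioms. The real difficulty --- already flagged in the discussion preceding the theorem --- is the standing assumption that the commutation identity \eqref{eq:d-comm} holds on $\cH_p$: unlike the case $d=1$ (Lemma \ref{lm: 1d_composition_prop}), for $d\ge 2$ this fails for generic diffeomorphisms and severely restricts the admissible $\cH_p$, a point made precise later in the paper (e.g.\ Theorem \ref{thm: nD_group}). The one small ancillary fact worth checking, so that Theorem \ref{Thm: setmD} genuinely applies, is that $\mathcal W_p^{-1}\subset\FS_d^G$; this holds because the inverse of a $C^1$ diffeomorphism is $C^1$ and the inverse of the gradient of a convex function is the gradient of its Legendre conjugate, which is again convex.
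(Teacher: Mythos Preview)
Your proposal is correct and matches the paper's own treatment: the paper presents this theorem explicitly as a corollary of Theorem~\ref{Thm: setmD} (``As a corollary, when $\mathcal W_p$ is a group, we have\ldots''), relying on exactly the observation that $\cH_p^{-1}=\cH_p$ for a group. The supplementary computation you include for Theorem~\ref{Thm: setmD} is also the intended argument, paralleling the proof of Theorem~\ref{ConvexGroup1dSet}.
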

{If Equation \eqref{eq:d-comm} is to hold on a set $\MS$ for all $p\in \mP_d$, we get an analog of Theorem \ref{ConvexGroup1dSet}:}
{\begin{theorem} \label{ConvexGroupmdSet}
  Let $\MS\subset \FS^G_d$ be such that Equation \eqref{eq:d-comm} holds on $\MS$ for all $p\in \mP_d$. Then $\widehat{\GM}_{p,\MS}$ is convex for every $p\in \mP_d$ if and only if $\MS^{-1}:=\{s^{-1}\mid \; s \in \MS \}$ is convex. 
\end{theorem}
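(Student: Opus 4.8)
The plan is to transcribe the proof of Theorem~\ref{ConvexGroup1dSet} almost verbatim, the single substitution being that the composition identity, supplied for free in one dimension by Lemma~\ref{lm: 1d_composition_prop}, is now the standing hypothesis \eqref{eq:d-comm}. For the direction ``$\MS^{-1}$ convex $\Rightarrow$ convexity'' I would fix $p\in\mP_d$, pick $p_{h_1},p_{h_2}\in\GM_{p,\MS}$ and $\alpha\in[0,1]$; since \eqref{eq:d-comm} holds on $\MS$ for this $p$ we may write $\widehat p_{h_i}=h_i^{-1}\circ\widehat p$, so that
\[
\alpha\,\widehat p_{h_1}+(1-\alpha)\,\widehat p_{h_2}=\bigl(\alpha h_1^{-1}+(1-\alpha)h_2^{-1}\bigr)\circ\widehat p.
\]
By convexity of $\MS^{-1}$ there is $g\in\MS$ with $g^{-1}=\alpha h_1^{-1}+(1-\alpha)h_2^{-1}$, and applying \eqref{eq:d-comm} once more, now to $g$, the right-hand side equals $\widehat p_g$ with $p_g\in\GM_{p,\MS}$; hence $\widehat{\GM}_{p,\MS}$ is convex. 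This is, line for line, the argument used for Theorem~\ref{ConvexGroup1dSet}.

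For the converse I would assume $\widehat{\GM}_{p,\MS}$ is convex for every $p\in\mP_d$, fix $h_1,h_2\in\MS$, $\alpha\in[0,1]$ and set $F:=\alpha h_1^{-1}+(1-\alpha)h_2^{-1}$; the goal is $F\in\MS^{-1}$, i.e.\ to find $g\in\MS$ with $g^{-1}=F$. Convexity applied to a given $p$, together with \eqref{eq:d-comm}, produces some $g_p\in\MS$ with $g_p^{-1}\circ\widehat p=F\circ\widehat p$ holding $\mu_r$-almost everywhere. Choosing $p=r$, for which $\widehat r=\id$, this yields $g_r^{-1}=F$ $\mu_r$-a.e.\ on $\supp(r)$. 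What is left is to \emph{cancel} $\widehat p$: to promote this coincidence on $\supp(r)$ to an identity of maps on all of $\R^d$ --- exactly the step the one-dimensional proof dispatches by ``varying $p$''.

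I expect that cancellation to be the only genuine difficulty, and I would settle it by noticing that the hypothesis of the theorem is already very restrictive: by Theorem~\ref{thm: nD_group}, requiring \eqref{eq:d-comm} to hold on $\MS$ for all $p\in\mP_d$ forces $\MS\subseteq\cH_a$, so every element of $\MS$, hence of $\MS^{-1}$, and hence $F$ itself (a convex combination of two such maps) is an affine map $x\mapsto ax+b$ with $a>0$. Two affine maps agreeing on $\supp(r)$, a set of positive Lebesgue measure, agree on $\R^d$; therefore $F=g_r^{-1}\in\MS^{-1}$ and the converse is done. If one wishes instead to stay within the one-dimensional template, one runs the $p=r$ argument with $r$ replaced by each of its translates $p_\mu$, obtaining $g_{p_\mu}^{-1}=F$ on $\supp(r)+\mu$ and letting $\mu$ sweep out $\R^d$; but extracting from this local data a \emph{single} $g\in\MS$ with $g^{-1}=F$ still requires the affineness just noted, so an appeal to Theorem~\ref{thm: nD_group} (or a re-derivation of its relevant part) seems unavoidable. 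Beyond this point, both implications are routine transcriptions of the $d=1$ argument, with \eqref{eq:d-comm} everywhere playing the role of Lemma~\ref{lm: 1d_composition_prop}.
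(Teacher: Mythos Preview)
The paper does not spell out a separate proof of this theorem: it is presented as ``an analog of Theorem~\ref{ConvexGroup1dSet}'', and the proofs subsection for Section~\ref{Sec: multi-D} treats only Theorem~\ref{thm: nD_group} and Proposition~\ref{prop: composition_gradient}. The intended argument is therefore exactly the transcription you describe, with the hypothesis \eqref{eq:d-comm} replacing Lemma~\ref{lm: 1d_composition_prop}; your forward direction is word for word the paper's.

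For the converse, the paper's one-dimensional proof dispatches the cancellation step with the single phrase ``by varying $p$ (e.g., by choosing $p$ as translations of $r$)''. Your diagnosis that this leaves the passage from the family of local identities $g_p^{-1}\circ\widehat p=F\circ\widehat p$ to a single global $g^{-1}=F$ somewhat informal is fair, and your remedy---invoking Theorem~\ref{thm: nD_group} to force $\MS\subseteq\cH_a$, so that both $F$ and $g_r^{-1}$ are affine and hence determined by their values on the positive-measure set $\supp(r)$---is a correct and tidy way to close it. This is a sharpening rather than a genuinely different route: the paper's implicit proof would simply repeat the one-dimensional ``vary $p$'' sentence verbatim, whereas you make that step rigorous in the multi-dimensional setting at the cost of forward-referencing Theorem~\ref{thm: nD_group}.
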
}
 Note that, unlike $\FS_1^G$, $\FS_d^G$ is not a group for $d\ge 2$. Our next goal is to find conditions on $\cH$ such that $\widehat S_{p,\cH}$ is convex for all $p \in \mP_d$. On the other hand, there are groups that are subsets of $\FS_1^G$. For examples,  the diffeomorphisms  group of translations and isotropic scaling{s} $\Ha:=\{L_{a,u}(x):=ax+u\mid  a>0, u \in \R^d\}$. To see this, let $h\in \cH_a$, and let $h^{-1}(x):= \alpha x + u$ for some $\alpha >0$ and $u\in \R^d$. For $p\in \mP_d$, we have that $\widehat p = \grad \phi_p$ for some convex function $\phi_p$. Hence $$h^{-1}\circ \widehat p = h^{-1}\circ \grad \phi_p = \alpha \grad \phi_p +u.$$ 
To see that $h^{-1}\circ \widehat p=\widehat p_h$,  we use the second part of Brenier's Theorem above \ref {thmBrenier} and simply note that $h^{-1}\circ \widehat p=\alpha \grad \phi_p +u$ is the gradient of the convex function $\psi(x)= \alpha \phi_p(x)+ u\cdot x$. We have the following Theorem.
\begin{theorem}\label{thm: nD_group}
	Let $d\geq 2$ and $\MS \subseteq \FS_d^G$. If for any $p\in \mP_d$ Equation \eqref{eq:d-comm}
	 holds for all $h\in \MS$, then $\MS \subseteq \Ha$. 
\end{theorem}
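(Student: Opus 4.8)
The plan is to fix a single $h \in \MS$ and prove $h \in \Ha$; since $\MS \subseteq \FS_d^G$, $h$ and therefore $h^{-1}$ are $C^1$ diffeomorphisms of $\R^d$. First I would recast the hypothesis. From the generative formula $p_h = |\det J_h|\,(p\circ h)$ one checks directly that $\mu_{p_h} = (h^{-1})_\#\mu_p = (h^{-1}\circ\widehat p)_\#\mu_r$, so $h^{-1}\circ\widehat p$ is a transport map from $r$ to $p_h$ (square-integrable against $\mu_r$ because $p_h$ is compactly supported). By the second part of Brenier's Theorem (Theorem \ref{thmBrenier}) together with the uniqueness in the first part, the equality $\widehat{p_h} = h^{-1}\circ\widehat p$ holds exactly when $h^{-1}\circ\widehat p \in \FS_d$, i.e.\ when $h^{-1}\circ\widehat p$ coincides, up to a $\mu_r$-null set, with the gradient of a convex function on $\R^d$. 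Thus the hypothesis is equivalent to: $h^{-1}\circ\widehat p \in \FS_d$ for every $p \in \mP_d$.

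Second, I would plug in a rich family of transforms. For a symmetric positive-definite matrix $S$ and $b\in\R^d$, the affine map $g_{S,b}(x) = Sx+b$ is the gradient of the convex quadratic $x\mapsto \tfrac12 x^\top S x + b\cdot x$ and is bounded on $\supp r$; the pushforward $(g_{S,b})_\#\mu_r$ has density $p(y) = r\big(g_{S,b}^{-1}(y)\big)/\det S$, which lies in $\mP_d$, and Brenier's Theorem gives $\widehat p = g_{S,b}$. Hence $h^{-1}\circ g_{S,b} \in \FS_d$ for all such $S$ and $b$. Because this map is $C^1$, membership in $\FS_d$ forces its Jacobian $J_{h^{-1}}(Sx+b)\,S$ to be symmetric and positive semidefinite (modulo the measure-theoretic point discussed below); sliding $b$ over $\R^d$ then yields, with $A := J_{h^{-1}}$, that $A(y)\,S$ is symmetric positive semidefinite for every $y\in\R^d$ and every symmetric positive-definite $S$.

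The rest is linear algebra and a short ODE. Taking $S = I$ in $A(y)S = (A(y)S)^\top = S A(y)^\top$ gives $A(y) = A(y)^\top$; then $A(y)S = S A(y)$ for all symmetric $S$, and testing against diagonal matrices with distinct entries and against $2I + E_{12} + E_{21}$ forces $A(y) = \lambda(y)I$ for a scalar $\lambda(y)$, with $\lambda(y) > 0$ since $A(y)$ is invertible and positive semidefinite. Writing $h^{-1} = (\phi_1,\ldots,\phi_d)$, the relations $\partial_j\phi_i \equiv 0$ for $i\neq j$ say each $\phi_i$ depends only on $y_i$, whence $\partial_i\phi_i = \lambda$ shows $\lambda$ depends only on $y_i$ — simultaneously for each $i$. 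Since $d \ge 2$ this makes $\lambda$ a positive constant $a$, so $h^{-1}(y) = ay + c$ and $h(x) = a^{-1}x - a^{-1}c \in \Ha$. (This is precisely where $d\ge 2$ is used; for $d=1$ the argument degenerates, consistent with Theorem \ref{ConvexGroup1dSet}.)

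I expect the delicate point to be the claim that membership of $h^{-1}\circ g_{S,b}$ in $\FS_d$ gives a genuinely pointwise Jacobian condition, since that membership holds only up to a $\mu_r$-null set and $\supp r$ need not have nonempty interior. To handle this I would use that, by definition of support, every open set meeting $\supp r$ has positive $\mu_r$-measure, so the continuous map $h^{-1}\circ g_{S,b}$ agrees with the (cyclically monotone) element of $\FS_d$ it represents on a dense subset of $\supp r$, hence is cyclically monotone on $\supp r$; a Lebesgue-density argument at density points of $\supp r$ then upgrades this to symmetry and positive semidefiniteness of the Jacobian there, after which varying $b$ propagates the condition to all of $\R^d$ as used above. (If one is content to assume $r$ has support with nonempty interior, this step is immediate.)
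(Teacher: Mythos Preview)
Your approach is essentially the paper's own proof: the paper also tests the hypothesis against quadratic transport maps $\widehat p(x)=Sx$ with $S$ symmetric positive definite, observes (its Lemma~\ref{lm:hessiancommute}) that the Jacobian $J_{h^{-1}}$ composed with $S$ must be symmetric and hence commute with $S$, uses first a diagonal $S$ with distinct entries (Lemma~\ref{lm: diagm}) and then a second judiciously chosen $S$ (Lemma~\ref{lm: mI}) to force $J_{h^{-1}}(y)=\lambda(y)I$, and closes with exactly your separation-of-variables argument to make $\lambda$ constant.

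One small slip: for $d\ge 3$, the single matrix $2I+E_{12}+E_{21}$ only pins down $a_1=a_2$ in the diagonal $A(y)=\mathrm{diag}(a_1,\dots,a_d)$, since it acts as $2I$ on $e_3,\dots,e_d$. You need either the full family $2I+E_{ij}+E_{ji}$ over all pairs $(i,j)$, or---as the paper does---a single positive-definite $S$ possessing a simple eigenvector with no zero coordinate, so that Lemma~\ref{lm: mI} applies directly. With that fix your linear-algebra step is complete. Your explicit discussion of the $\mu_r$-a.e.\ issue and the continuity/density patch is in fact more careful than the paper, which applies its commutation lemma pointwise without comment; your suggested resolution (cyclical monotonicity on $\supp r$ via continuity, then propagation by varying $b$) is the right idea.
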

\begin{remark}
    Note that in the previous theorem, $\MS\subset \FS^G_d$ does not need to be a group. {Theorem \ref{ConvexGroupmdSet} gives a necessary and sufficient condition on $\MS$  under which  $\widehat S_{p,\MS}$ is convex for all $p\in \mP_d$, when Equation \eqref{eq:d-comm} holds.} {Condition \eqref{eq:d-comm} has been referred as the composition property for CDT \cite{park2017}, which holds naturally for all $p$ and $\cH\subset \mP_1$ in the one-dimensional case.  Recently, but after our manuscript had appeared on arXiv, Moosm\"{u}ller et al. have also uploaded \cite{moosmuller2020linear} to arXiv in which they derived similar results concurrently as in Theorem \ref{Thm: setmD}. In addition, one of  the open problems mentioned  by the authors of  \cite{moosmuller2020linear} is resolved in this paper. Specifically, we show in Theorem \ref{thm: nD_group} that the largest set such that the composition property holds for all $p$ when dimension $d\geq 2$ is $\Ha$, i.e., the set of translations and isotropic scalings.}
\end{remark}
\begin{corollary}
    Let $d\geq 2$ and $\cH\subseteq \FS_d^G$ be a subgroup. If for any $p\in \mP_d$ Equation \eqref{eq:d-comm}
	 holds for all $h\in \cH$, then $\cH \subseteq \Ha$.
\end{corollary}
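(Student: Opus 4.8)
The corollary is a one-line consequence of Theorem~\ref{thm: nD_group}: a subgroup $\cH \subseteq \FS_d^G$ is in particular a subset of $\FS_d^G$, so applying that theorem with $\MS = \cH$ gives $\cH \subseteq \Ha$. Thus the real content is Theorem~\ref{thm: nD_group}, and I describe the plan for proving it. Throughout, $\widehat{\,\cdot\,} = \TMP_r$ is the LOT transform with respect to a fixed reference $r \in \mP_d$, and recall $\Ha = \{x \mapsto ax + u \mid a>0,\ u\in\R^d\}$.

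First I would convert the hypothesis on $\MS$ into a statement purely about compositions in $\FS_d^G$. Fix $h \in \MS$. The key point is that the generative model ranges over enough functions: for \emph{every} $g \in \FS_d^G$ there is a $p \in \mP_d$ with $\widehat p = g$. Indeed, since $g$ is a $C^1$ diffeomorphism and $\supp(r)$ is compact, the push-forward $g_\#\mu_r$ has a compactly supported density $p \in \mP_d$ (by the change of variables formula), and because $g = \nabla\phi$ with $\phi$ convex and $\int_{\R^d}|g(x)|^2 r(x)\,dx < \infty$, the converse part of Brenier's Theorem (Theorem~\ref{thmBrenier}) tells us $g$ is the optimal map from $r$ to $p$, i.e.\ $\widehat p = g$. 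Feeding such a $p$ into \eqref{eq:d-comm} gives $\widehat{p_h} = h^{-1}\circ g$; the left side is an optimal transport map, hence lies in $\FS_d$, and $h^{-1}\circ g$ is a $C^1$ diffeomorphism, so $h^{-1}\circ g \in \FS_d^G$. Since $g$ was arbitrary, we have shown $h^{-1}\circ \FS_d^G \subseteq \FS_d^G$ for every $h \in \MS$.

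Next I would exploit this composition property infinitesimally. Write $\psi = h^{-1}$; both $\psi$ and any $g \in \FS_d^G$ are gradients of convex $C^2$ functions, so $J_\psi$ and $J_g$ are symmetric everywhere. Membership $\psi\circ g \in \FS_d^G$ forces $J_{\psi\circ g}(x) = J_\psi(g(x))\,J_g(x)$ to be symmetric for every $x$, which, given that the two factors are symmetric, is equivalent to $J_\psi(g(x))$ and $J_g(x)$ commuting. Now I would specialize to affine test maps: for any $y_0 \in \R^d$ and any symmetric positive-definite matrix $S$, the affine map $g(x) = Sx + y_0$ lies in $\FS_d^G$ (it is the gradient of the convex quadratic $\tfrac12 x^\top S x + y_0\cdot x$) and satisfies $g(0) = y_0$, $J_g(0) = S$; hence $J_\psi(y_0)$ commutes with every symmetric positive-definite $S$. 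Since every symmetric matrix is a difference of two positive-definite ones, $J_\psi(y_0)$ commutes with all symmetric matrices, and by a standard fact this forces $J_\psi(y_0) = \lambda(y_0)\,\id$ with $\lambda(y_0) > 0$ (positivity because $J_\psi \succ 0$). Finally, $J_\psi \equiv \lambda(\cdot)\,\id$ and $\psi \in C^1$ give $\partial_i\psi_j = 0$ for $i \ne j$ and $\partial_j\psi_j = \lambda$ for each $j$; the off-diagonal relations make $\psi_j$ depend on $y_j$ only, so $\lambda = \partial_1\psi_1$ depends on $y_1$ only and simultaneously $\lambda = \partial_2\psi_2$ depends on $y_2$ only — here is where $d \ge 2$ is used — forcing $\lambda$ to be a positive constant $a$ and $\psi(y) = ay + c$. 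Therefore $h = \psi^{-1} \in \Ha$, and since $h \in \MS$ was arbitrary, $\MS \subseteq \Ha$; the corollary then follows as noted.

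The step I expect to be the main obstacle is the first one: justifying rigorously that $\{\widehat p : p \in \mP_d\} = \FS_d^G$ (so that the hypothesis really does constrain $h^{-1}\circ g$ for all $g$, not just for the $g$'s that happen to be transforms), and handling carefully the identification of optimal transport maps with gradients of convex functions — in particular that the equality in \eqref{eq:d-comm}, a priori only $\mu_r$-almost everywhere, can be taken to hold between the everywhere-defined $C^1$ maps $\widehat{p_h}$ and $h^{-1}\circ g$. Once that reduction is in place, the remaining ingredients — the Jacobian identity under composition, the linear-algebra fact that a matrix commuting with all positive-definite matrices is a scalar, and the elementary integration of $J_\psi = \lambda\,\id$ that needs $d \ge 2$ — are routine.
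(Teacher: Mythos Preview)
Your proposal is correct and follows the same strategy as the paper's proof of Theorem~\ref{thm: nD_group} (via Proposition~\ref{prop: composition_gradient}): both reduce the hypothesis to the commutativity of $J_{h^{-1}}(\nabla\phi_p(x))$ and $\nabla^2\phi_p(x)$ (this is exactly Lemma~\ref{lm:hessiancommute}), test against quadratic $\phi_p$ to pin down $J_{h^{-1}}$, and then integrate $J_{h^{-1}}=\lambda(\cdot)\,I$ using $d\ge 2$ to conclude $h^{-1}$ is affine. The one substantive difference is in the linear-algebra step: the paper does it in two stages---first a diagonal quadratic with distinct entries forces $J_{h^{-1}}$ to be diagonal everywhere (Lemma~\ref{lm: diagm}), and then a quadratic $\tfrac12 x^\top M x$ with $M$ having a simple eigenvalue whose eigenvector has no zero entries forces the diagonal matrix to be scalar (Lemma~\ref{lm: mI})---whereas you test against \emph{all} affine maps $g(x)=Sx+y_0$ at once and use that a (symmetric) matrix commuting with every symmetric positive-definite matrix must be a scalar multiple of the identity. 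Your version collapses the two lemmas into a single, slightly cleaner observation; the paper's two-step route is a bit more hands-on but otherwise equivalent.
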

\begin{remark}\label{rmk: relaxation}
    Theorem \ref{thm: nD_group} and its corollary show that the partitioning of the set of transformed signals $\widehat \mP_d$ into convex sets is much more constrained than in the one-dimensional case. However, by allowing condition \eqref{eq:d-comm} to hold only on a subset of $\mP_d$, one can enlarge the set of convex generative models.
\end{remark}

 \subsection{Relaxation in dimension  \texorpdfstring{$d=2$}{d2}}\label{sect: relax}
As mentioned in Remark \ref{rmk: relaxation}, one can relax the condition in \eqref{eq:d-comm} to hold on a subset of $\mP_d$ rather than all of $\mP_d$. This  group is strictly larger than $\cH_a$ satisfying Equality \eqref{eq:d-comm}. In this section, we show how to construct such subset $\mP_r\subset\mP_2$ and group $\cH_r$ strictly larger than $\cH_a$ such that Equation \eqref{eq:d-comm} holds for all $h\in \cH_r$ and $p\in \mP_r$.

\begin{definition}[Restrictive sets of transformations and PDFs]\label{def:Hr}
\begin{equation}
    \cH_r= \left\{h(x,y):=\frac{1}{2}\begin{bmatrix}
			f^{\prime}(x+y)+g^{\prime}(x-y)\\
			f^{\prime}(x+y)-g^{\prime}(x-y)
		\end{bmatrix}\mid f,g \in \mathcal R\right\}
\end{equation}
where $\mathcal R = \{f\in C^2(\R)\mid  f^{\prime} ~\textrm{is a strictly increasing bijection on}~ \R \}$.
\begin{equation}
		\mP_r:= \{p\in \mP_2 : |\det J_{h}|(p\circ h) = r \text{ for some } h\in \cH_r  \}.
	\end{equation}
\end{definition}
\begin{remark}\label{rmk: propchr}
$\cH_r$ has the following properties:
\begin{itemize}
    \item[i)] for any $h\in \cH_r$, $h = \grad \phi$ for the convex function $\phi(x,y)=f(x+y)+g(x-y)$. The fact that $\phi$ is a convex function on $\R^2$ follows from the fact that $f^{\prime}, g^{\prime}$ are strictly increasing. 

    \item[ii)] for any $h_1,h_2\in \cH_r$, $h_1\circ h_2 = \grad \psi$ for some convex function $\psi:\R^2\rightarrow \R$. To see this, one can check that \begin{equation} \label {ExampleinR2}
    (h_1\circ h_2)(x,y)= \frac{1}{2}\begin{bmatrix}
			f_1^{\prime}(f_2^{\prime}(x+y))+g_1^{\prime}(g_2^{\prime}(x-y))\\
			\fp{1}(\fp{2}(x+y))-\gp{1}(\gp{2}(x-y))
		\end{bmatrix},
\end{equation}
where $h_i(x,y)= \frac{1}{2}\grad\big(f_i(x+y)+g_i(x-y)\big)$ and $f_i,g_i\in \mathcal R$ for $i = 1,2$. Since $f_i^{\prime},g_i^{\prime}$ are strictly increasing, so are $f_1^{\prime}\circ f_2^{\prime}$ and $g_1^{\prime}\circ g_2^{\prime}$. The conclusion that $h_1\circ h_2 = \grad \psi$ for some convex function $\psi$ follows from Part i) and identity \eqref {ExampleinR2} above.

\item[iii)] for any $h\in \cH_r$, $h^{-1}\in \cH_r$. To see this, a direct computation gives
\begin{equation}\label{eq: fpinverse}
	 h^{-1}(z,w)=\frac{1}{2}
	\begin{bmatrix}
		\fpi(z+w)+\gpi(z-w)\\\fpi(z+w)-\gpi(z-w)
	\end{bmatrix}.
\end{equation} Since $f^{\prime}, g^{\prime}$ are strictly increasing bijections on $\R$, so are $\fpi,\gpi$, and  the conclusion then follows from Part i).
    
\end{itemize}
\end{remark}
By part i) of the previous remark, it follows that every $h\in \cH_r$ is a conservative vector field (i.e., $h = \grad \phi$ for some $C^1$ function $\phi$) and hence is irrotational since $\grad \times h = \grad \times \grad \phi = 0$.  Figure \ref{fig:vecfield_ex} shows an example of such vector fields generated by some $f,g$ which are invertible and whose derivatives are strictly increasing on $[-5,5]$.

\begin{figure}[!hbt]
    \centering
    \includegraphics[width=10cm]{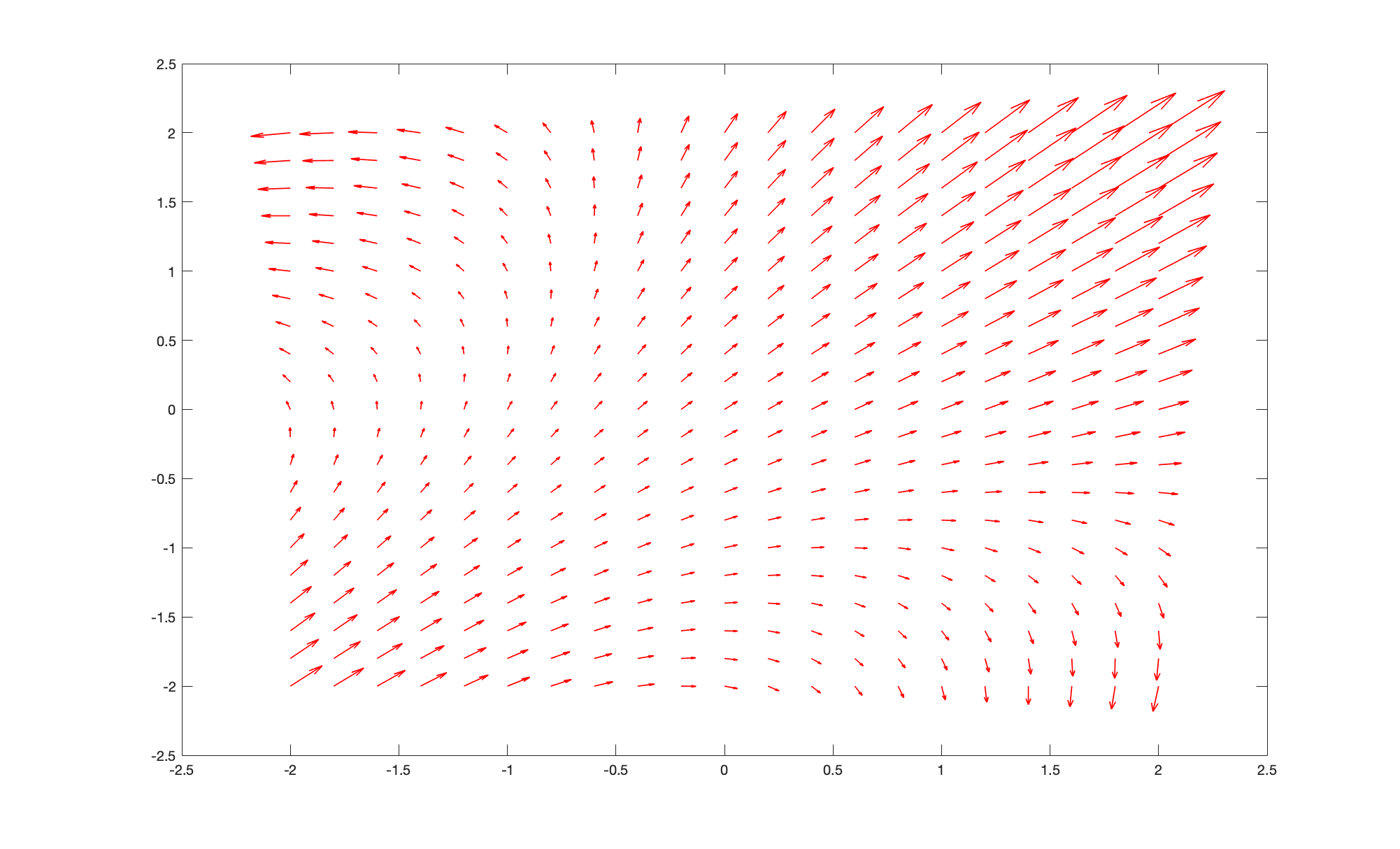}
    \caption{A vector field $h$ on a grid $[-2,2]\times [-2,2]$ generated with $f^{\prime}(t)= t+ 0.1t^2$ and $g^{\prime}(t) = t$.}
    \label{fig:vecfield_ex}
\end{figure}
 By part iii) of  Remark \ref {rmk: propchr}, we have that $\cH_r$ is a group. From the definition of $\mP_r$ and  Brenier's Theorem \ref{thmBrenier}, we have that  $\widehat p\in \cH_r$. The fact that $h\circ \widehat p$ is the gradient of some convex function follows from part ii) of Remark \ref{rmk: propchr}. Thus,  for any $p\in \mP_r$, Equation \eqref{eq:d-comm} holds for all $h\in \cH_r$. In summary, we have the following theorem:
\begin{theorem}\label{thmn2hatformula}
	For any $p\in \mP_r$, we have  $\widehat p_h = h^{-1}\circ \widehat p$ for
any $h\in \cH_r$ where $p_h=|\det J_h|\cdot p\circ h$.	
\end{theorem}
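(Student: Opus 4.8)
\textit{Proposal.} The plan is to run the same argument as in the one-dimensional Lemma \ref{lm: 1d_composition_prop}, with the monotonicity bookkeeping replaced by the ``gradient of a convex function'' bookkeeping packaged in Remark \ref{rmk: propchr}, and then to invoke the two directions of Brenier's Theorem \ref{thmBrenier}. The first step I would carry out is to observe that $\widehat p\in\cH_r$ for every $p\in\mP_r$: by Definition \ref{def:Hr} there is some $h_0\in\cH_r$ with $|\det J_{h_0}|\,(p\circ h_0)=r$, which by \eqref{MassCons} means precisely $\mu_p=(h_0)_\#\mu_r$. Part i) of Remark \ref{rmk: propchr} gives $h_0=\grad\phi_0$ with $\phi_0$ convex, so $h_0\in\FS_2$, and $\int_{\R^2}|h_0(x)|^2 r(x)\,dx=\int_{\R^2}|y|^2 p(y)\,dy<\infty$ since $p\in\mP_2$ is compactly supported; hence the converse and uniqueness parts of Theorem \ref{thmBrenier} force $\TMP_r(p)=h_0\in\cH_r$.

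Next I would fix $h\in\cH_r$ and set $p_h=|\det J_h|\,(p\circ h)$. Reading \eqref{MassCons} with $T=h$ yields $\mu_p=h_\#\mu_{p_h}$, while by definition $\mu_p=\widehat p_\#\mu_r$ and $\mu_{p_h}=(\widehat p_h)_\#\mu_r$. Composing push-forwards (using $(S\circ T)_\#=S_\#T_\#$), the map $h^{-1}\circ\widehat p$ satisfies $(h^{-1}\circ\widehat p)_\#\mu_r=h^{-1}_\#(\widehat p_\#\mu_r)=h^{-1}_\#\mu_p=h^{-1}_\#(h_\#\mu_{p_h})=\mu_{p_h}$. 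Moreover $h^{-1}\in\cH_r$ by part iii) of Remark \ref{rmk: propchr}, and $\widehat p\in\cH_r$ by the previous step, so $h^{-1}\circ\widehat p$ is a composition of two elements of $\cH_r$; by part ii) of Remark \ref{rmk: propchr} it equals $\grad\psi$ for some convex $\psi:\R^2\to\R$, hence lies in $\FS_2$. Finally $\int_{\R^2}|h^{-1}\circ\widehat p(x)|^2 r(x)\,dx=\int_{\R^2}|y|^2 p_h(y)\,dy<\infty$ because $p_h\in\mP_2$ is compactly supported. The converse part of Theorem \ref{thmBrenier} then says $h^{-1}\circ\widehat p$ is an optimal transport map carrying $\mu_r$ to $\mu_{p_h}$, and the uniqueness part identifies it, up to $\mu_r$-null sets, with $\TMP_r(p_h)=\widehat p_h$. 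This is exactly \eqref{eq:d-comm}, which is the assertion of the theorem.

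I expect no genuine obstacle here: the substantive content is entirely in Remark \ref{rmk: propchr}, namely that $\cH_r$ is closed under inversion and composition \emph{within} the class of gradients of convex functions, so no new estimate is required. The only points that need care are of a bookkeeping nature --- getting the directions right in \eqref{MassCons} (so that it is $h_\#$, not $h^{-1}_\#$, that pushes $\mu_{p_h}$ forward to $\mu_p$), and verifying the finite-second-moment hypothesis of the converse half of Brenier's Theorem at each of its two applications --- both of which reduce, as above, to the fact that $p$ and $p_h$ are compactly supported probability densities and hence have finite second moments.
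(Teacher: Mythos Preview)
Your proposal is correct and follows essentially the same route as the paper: first deduce from the definition of $\mP_r$ and Brenier's Theorem that $\widehat p\in\cH_r$, then use parts ii) and iii) of Remark \ref{rmk: propchr} to see that $h^{-1}\circ\widehat p$ is the gradient of a convex function, and conclude via the converse and uniqueness in Theorem \ref{thmBrenier}. Your write-up is in fact more careful than the paper's terse paragraph --- you make the push-forward directions explicit and check the finite-second-moment hypothesis at both invocations of Brenier --- but the underlying argument is identical.
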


By an  argument similar to the one for Theorem \ref{ConvexGroup1d}, we have the following corollary of Theorem \ref {thmn2hatformula}:
\begin{corollary}
    Let $\cH\subseteq \cH_r$ be a group. Then $\widehat \GM_{p,\cH}$ is convex for any $p\in \mP_r$ if and only if $\cH$ is convex.
\end{corollary}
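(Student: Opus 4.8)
The plan is to deduce this corollary from Theorem \ref{thmn2hatformula} in exactly the way Corollary \ref{ConvexGroup1d} was deduced from Theorem \ref{ConvexGroup1dSet} (via Lemma \ref{lm: 1d_composition_prop}). Two preliminary observations streamline the argument. First, $r\in\mP_r$: the identity map lies in $\cH_r$ (take $f(t)=g(t)=\tfrac12 t^2$, so that $h(x,y)=(x,y)$), and $|\det J_{\id}|\,(r\circ\id)=r$. Second, $\widehat r=\id$ $\mu_r$-a.e.: the identity pushes $\mu_r$ forward to $\mu_r$ with zero quadratic cost and equals $\grad\big(\tfrac12|x|^2\big)$, so Brenier's Theorem \ref{thmBrenier} forces it to be the unique optimal transport from $r$ to $r$. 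Combining the second observation with Theorem \ref{thmn2hatformula} gives $\widehat{r_h}=h^{-1}\circ\widehat r=h^{-1}$ for every $h\in\cH_r$.

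Now fix a group $\cH\subseteq\cH_r$. For any $p\in\mP_r$, Theorem \ref{thmn2hatformula} together with $\cH^{-1}=\cH$ gives
\[
\widehat{\GM}_{p,\cH}=\{\,\widehat{p_h}\mid h\in\cH\,\}=\{\,h^{-1}\circ\widehat p\mid h\in\cH\,\}=\{\,g\circ\widehat p\mid g\in\cH\,\}.
\]
For the ``if'' direction, assume $\cH$ is convex. Given $g_1,g_2\in\cH$ and $\alpha\in[0,1]$, put $g:=\alpha g_1+(1-\alpha)g_2\in\cH$; pointwise, $\alpha\,(g_1\circ\widehat p)+(1-\alpha)\,(g_2\circ\widehat p)=g\circ\widehat p$. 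Since $\cH$ is a group, $g^{-1}\in\cH\subseteq\cH_r$, so Theorem \ref{thmn2hatformula} (applied with $h=g^{-1}$) identifies $g\circ\widehat p=\widehat{p_{g^{-1}}}\in\widehat{\GM}_{p,\cH}$; hence $\widehat{\GM}_{p,\cH}$ is convex for every $p\in\mP_r$. For the ``only if'' direction, assume $\widehat{\GM}_{p,\cH}$ is convex for all $p\in\mP_r$ and specialize to $p=r$ (legitimate by the first observation): by $\widehat{r_h}=h^{-1}$ and $\cH^{-1}=\cH$ we get $\widehat{\GM}_{r,\cH}=\{\,h^{-1}\mid h\in\cH\,\}=\cH$, so the assumed convexity of $\widehat{\GM}_{r,\cH}$ is literally the convexity of $\cH$. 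One could instead mirror the ``vary $p$'' step in the proof of Theorem \ref{ConvexGroup1dSet}: as $p$ ranges over $\mP_r$ the map $\widehat p$ ranges over all of $\cH_r$ (since $p\in\mP_r$ forces $\widehat p=h$ for the $h\in\cH_r$ with $p_h=r$), and the instance $\widehat p=\id$ collapses $g\circ\widehat p$ to $g$.

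I do not expect a genuine obstacle, since Theorem \ref{thmn2hatformula} already supplies the key commutation identity. The only points needing mild care are the group bookkeeping — a convex combination $\alpha g_1+(1-\alpha)g_2$ of elements of $\cH$ must be realized as $h^{-1}$ for some $h\in\cH$ so that it re-enters $\widehat{\GM}_{p,\cH}$ — and the implicit fact that the pointwise convex combination $g\circ\widehat p$, a priori merely a map $\R^2\to\R^2$, is again an optimal transport map; this is exactly what Theorem \ref{thmn2hatformula} guarantees once we know $g^{-1}\in\cH\subseteq\cH_r$, its proof invoking part ii) of Remark \ref{rmk: propchr} precisely for this purpose.
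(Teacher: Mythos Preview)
Your proof is correct and follows essentially the same route the paper indicates: it says only that the corollary follows ``by an argument similar to the one for Theorem \ref{ConvexGroup1d},'' i.e., transplant the proof of Theorem \ref{ConvexGroup1dSet}/Corollary \ref{ConvexGroup1d} with Theorem \ref{thmn2hatformula} supplying the composition identity in place of Lemma \ref{lm: 1d_composition_prop}. Your specialization to $p=r$ for the ``only if'' direction is a slight streamlining of the one-dimensional ``vary $p$'' step (which you also sketch), and your careful check that $r\in\mP_r$ and $\widehat r=\id$ is exactly what is needed to make that specialization legitimate.
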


\subsubsection{Convex Subgroups of $\cH_r$}
We remark first that the convex group $\cH_a$ of translation and isotropic scaling diffeomorphisms in dimension $2$ is a subgroup of $\cH_r$. In particular,  by choosing $f_{a,b_1}(t)= \frac 1 2 at^2+b_1 t$ and $g_{a,b_2}(t)= \frac 1 2 at^2+b_2t$ where $a>0$ and $b_1,b_2\in \R$, one obtains that 
$h(x,y) =\frac{1}{2}\grad\big(f_{a,b_1}(x+y)+g_{a,b_2}(x-y)\big)= a \begin{bmatrix}
x\\y
\end{bmatrix}+\frac 1 2 \begin{bmatrix}b_1+b_2\\b_1-b_2
\end{bmatrix}.$

Indeed, we can construct other examples of convex subgroups of $\cH_r$ by judiciously choosing $f,g$.
\begin{example} \label {example2D}
	Let $\mathcal R_s = \{f(t)= at^2+bt\mid a>0, b\in \R \}$ and $\cH_s = \{h(x,y)= \frac{1}{2}\grad\big(f(x+y)+g(x-y)\big)\mid f,g \in \mathcal R_s\}$
\end{example}

By direct computation, it is easy to see that every $h\in \cH_s$ is of the form 
	\begin{equation}
		h(x,y) = (a_1+a_2)\begin{bmatrix}
			x\\y
		\end{bmatrix}+(a_1-a_2)\begin{bmatrix}
			y\\x
		\end{bmatrix}+ \begin{bmatrix}
			b_1-b_2\\b_1+b_2
		\end{bmatrix},
	\end{equation}
	and vice versa, where $a_1,a_2>0$ and $b_1,b_2\in \R$. Equivalently, $h(x,y)=A\begin{bmatrix}
	x\\y
	\end{bmatrix}+ u$, where $A =\begin{bmatrix}
	a_1+a_2 & a_1-a_2\\a_1-a_2 & a_1+a_2
	\end{bmatrix} $ and $u = \begin{bmatrix}
	b_1-b_2\\b_1+b_2
	\end{bmatrix}.$ 
	It is not difficult to show  that $\cH_s$ is a convex group of diffeomorphisms under the composition operation.

\subsection{Proofs of Section \ref {Sec: multi-D}}
\subsubsection {Proof of Theorems \ref {thm: nD_group}}
The proof of Theorem \ref {thm: nD_group} relies of the following proposition which will be proved toward the end of this section. Through the rest of the section,  $\phi_p$ will denote a convex function such that $\grad\phi_p$ is the optimal transport map between a fixed reference $r$ and a function  $p\in \mP_d^{*}$ ($d\ge2)$, where
\begin{equation}
    \mP_d^{*}:= \{p \in \mP_d\mid r= |\det \grad f|(p\circ f) ~\textrm{for some} ~f \in \FS_d^G \}.
\end{equation}

\begin{proposition}\label{prop: composition_gradient}
	Let $\varphi: \R^d \rightarrow \R$ be a convex function in $C^2(\R^d)$ such that for any $p\in \mP_d^{*}$, $\grad \varphi \circ \grad \phi_p$ can be written as $\grad \varphi \circ \grad \phi_p=\grad\gamma$ for some function $\gamma=\gamma(p)$. Then $\grad^2 \varphi(x)\equiv \alpha I_d$, where $I_d$ is the identity matrix in $\R^{d\times d}$. \end{proposition}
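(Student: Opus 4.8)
The plan is to convert the hypothesis on $\varphi$ into a pointwise algebraic constraint on the Hessian $H:=\grad^2 \varphi$ — a symmetric matrix field, since $\varphi\in C^2$ — and then to extract $H\equiv \alpha I_d$ by elementary linear algebra followed by a short regularity argument. The key preliminary step is to show that as $p$ ranges over $\mP_d^{*}$, the optimal potential $\phi_p$ can be chosen so that $\grad\phi_p$ is an arbitrary affine map $x\mapsto Sx+b$ with $S$ symmetric positive definite and $b\in\R^d$. Indeed, given such $S,b$, the affine map $f(x)=Sx+b$ lies in $\FS_d^G$: it is the gradient of the convex function $\psi(x)=\tfrac12 x^{T}Sx+b\cdot x$ and is a global $C^1$ diffeomorphism of $\R^d$. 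Setting $\mu_p:=f_\#\mu_r$, equivalently $p(y)=r(f^{-1}(y))\,|\det J_{f^{-1}}(y)|$, produces a non-negative normalized function supported on the compact set $f(\supp r)$, so $p\in\mP_d$, and by construction $r=|\det\grad f|\,(p\circ f)$, so $p\in\mP_d^{*}$. Since $\int_{\R^d}|f(x)|^{2}r(x)\,dx<\infty$, the converse part of Brenier's Theorem \ref{thmBrenier} identifies $f$ as the optimal transport from $r$ to $p$; that is, $\grad\phi_p=f$ and $\grad^2 \phi_p\equiv S$.

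Next I would record the pointwise constraint coming from the hypothesis. Since $\varphi,\phi_p\in C^2$, the vector field $\grad\varphi\circ\grad\phi_p$ is $C^1$ on the simply connected domain $\R^d$, hence equals $\grad\gamma$ for some $\gamma$ if and only if its Jacobian $H\big(\grad\phi_p(x)\big)\cdot\grad^2 \phi_p(x)$ is symmetric for every $x$. Plugging in the $p$ constructed above and evaluating at $x=0$ shows that $H(b)\,S$ is a symmetric matrix for every $b\in\R^d$ and every symmetric positive definite $S$; since $I+\varepsilon A\succ 0$ for small $\varepsilon>0$ and any symmetric $A$, it follows that $H(b)\,S$ is symmetric for every symmetric $S$.

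Fixing $b$ and combining symmetry of $H(b)S$ with symmetry of $H(b)$ and $S$ gives $H(b)S=SH(b)$, so $H(b)$ commutes with all symmetric matrices. Taking $S=vv^{T}$ and applying this identity to $v$ yields $H(b)v=\tfrac{v^{T}H(b)v}{|v|^{2}}\,v$ for every $v\neq 0$, so every vector is an eigenvector of $H(b)$, and therefore $H(b)=\alpha(b)I_d$ for a scalar $\alpha(b)\ge 0$. Finally, to see that $\alpha$ is constant: from $\grad^2 \varphi=\alpha(x)I_d$, the off-diagonal equations $\partial_i\partial_j\varphi\equiv 0$ for $i\neq j$ force $\partial_i\varphi$ to depend on $x_i$ only, so $\varphi(x)=c+\sum_{k=1}^{d}A_k(x_k)$ with each $A_k\in C^2$; then $\alpha(x)=\partial_k^{2}\varphi(x)=A_k''(x_k)$ depends only on $x_k$ for every $k$, and since $d\ge 2$ this forces $\alpha$ to be constant. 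Hence $\grad^2 \varphi\equiv \alpha I_d$.

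I expect the only step that is not routine to be the first one: one must verify that every symmetric positive definite affine diffeomorphism genuinely occurs as the optimal transport map $\grad\phi_p$ of an admissible density $p\in\mP_d^{*}$, which is exactly where Brenier's Theorem, together with the finite second moment supplied by compact support, is used. Once this reduction is in hand, the pointwise condition, the commuting-matrix computation, and the regularity step are all elementary.
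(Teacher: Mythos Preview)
Your proof is correct and follows the same overall arc as the paper's: realize each affine map $x\mapsto Sx+b$ (with $S$ symmetric positive definite) as the Brenier map $\grad\phi_p$ of an admissible $p\in\mP_d^{*}$, deduce from the gradient hypothesis that the Jacobian $H(b)\,S$ is symmetric (equivalently, that $H(b)$ and $S$ commute), conclude $H(b)=\alpha(b)I_d$ pointwise, and finish by the off-diagonal argument to make $\alpha$ constant. The difference is only in the linear-algebra extraction of $H(b)=\alpha(b)I_d$. The paper does this in two passes with two particular test matrices: first $S$ diagonal with distinct entries, which via Lemma~\ref{lm: diagm} forces $H$ to be diagonal everywhere; then $S=M$ positive definite with a simple eigenvalue whose eigenvector has all nonzero entries, which via Lemma~\ref{lm: mI} forces the diagonal entries to coincide. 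Your route is more streamlined: the perturbation $I+\varepsilon A\succ 0$ upgrades commutation with all positive definite $S$ to commutation with all symmetric $S$, and testing against $S=vv^{T}$ shows every vector is an eigenvector in one stroke. This eliminates the intermediate ``$H$ is diagonal'' stage and the need to build a special $M$, at the modest cost of the limiting argument that passes from positive definite to rank-one test matrices. Both arguments terminate with the identical separation-of-variables step showing $\alpha$ is constant.
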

We are now ready to prove Theorem \ref {thm: nD_group}.
\begin{proof}[Proof of Theorem  \ref{thm: nD_group}]
Recall that, by the assumption on  $\MS \subset \FS^G_d$, for any  $p\in \mP_d$ the following holds 
$$\widehat p_h = h^{-1}\circ \widehat p,\quad \text { for all } h\in\MS.$$
Using Brenier's Theorem \ref{thmBrenier}  above,  a transport map is optimal in the sense of \eqref{Monge} if and only if it is the gradient of a convex function. Accordingly, there exist convex functions $\phi_{p_h}, \varphi, \phi_p$ such that $\grad \phi_{p_h}= \widehat p_h$, $\grad \varphi = h^{-1}$ and $\grad \phi_p = \widehat p$ (note that like $h$, $h^{-1}$ is also an optimal transport map). In particular, we have that $\grad \varphi \circ \grad \phi_p=\grad \phi_{p_h}$ for every $p\in \mP_d^{*}$. By \cref{prop: composition_gradient}, it follows that $\grad^2 \phi \equiv \alpha I_d$ ($\alpha >0$). Hence $\grad \varphi (x)= \alpha x + b$, where $b\in \R^d$, i.e., $h^{-1}\in \cH_a$, which also implies $h \in \cH_a$. Thus $\MS \subseteq \cH_a$.
\end{proof} 
\subsubsection{ Proof of Proposition \ref{prop: composition_gradient}}
We start by proving the following two lemmas.
\begin{lemma}\label{lm:hessiancommute}
	Let $\varphi, \phi: \R^d\rightarrow \R$ be two  functions in $C^2(\R^d)$. If $\grad \varphi \circ \grad \phi = \grad \gamma$ for some  function $\gamma \in C^2(\R^d)$, then the matrix-valued functions  $(\grad^2\varphi)\circ \grad\phi$ and $\grad^2 \phi$ must commute, i.e., $\grad^2\varphi(\grad \phi(x)) \grad^2\phi(x) = \grad^2\phi(x)\grad^2\varphi(\grad\phi(x))$ for all $x\in \R^d$.
\end{lemma}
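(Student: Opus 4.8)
The plan is to differentiate the hypothesis $\grad\varphi\circ\grad\phi=\grad\gamma$ and then exploit the symmetry of Hessian matrices. First I would note that, since $\gamma\in C^2(\R^d)$, Schwarz's theorem guarantees that the Hessian $\grad^2\gamma(x)$ is a symmetric matrix for every $x\in\R^d$. Next, applying the chain rule to the vector field $x\mapsto \grad\varphi\big(\grad\phi(x)\big)$, its Jacobian matrix at $x$ equals $\big(\grad^2\varphi\big)\big(\grad\phi(x)\big)\,\grad^2\phi(x)$, with the two matrix factors composed in that order. Since this vector field is assumed to coincide with $\grad\gamma$, we obtain
\[
\big(\grad^2\varphi\big)\big(\grad\phi(x)\big)\,\grad^2\phi(x)=\grad^2\gamma(x)\qquad\text{for all }x\in\R^d,
\]
and the right-hand side is symmetric.

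The second step is purely linear-algebraic. Fix $x$ and write $A=\big(\grad^2\varphi\big)\big(\grad\phi(x)\big)$ and $B=\grad^2\phi(x)$; both are symmetric, again by Schwarz's theorem applied to $\varphi$ and to $\phi$ respectively (here $C^2$ regularity is what is used). The displayed identity says that $AB$ is symmetric, so $AB=(AB)^{T}=B^{T}A^{T}=BA$. Hence $A$ and $B$ commute, which is precisely the asserted relation $\grad^2\varphi\big(\grad\phi(x)\big)\,\grad^2\phi(x)=\grad^2\phi(x)\,\grad^2\varphi\big(\grad\phi(x)\big)$ for all $x$.

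I do not anticipate any genuine obstacle: the whole argument is the chain rule together with the elementary fact that the product of two symmetric matrices is symmetric if and only if the factors commute. The only points requiring care are getting the order of the matrix factors correct in the chain rule, and invoking the $C^2$ hypotheses in the right places so that all three Hessians appearing in the proof are honestly symmetric. This lemma then feeds directly into the proof of Proposition \ref{prop: composition_gradient}, where the commutation relation, holding for all $p\in\mP_d^{*}$ (equivalently, for a rich enough family of optimal maps $\grad\phi_p$), will be used to force $\grad^2\varphi$ to be a scalar multiple of the identity.
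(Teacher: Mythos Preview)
Your proposal is correct and matches the paper's own proof essentially line for line: differentiate the hypothesis via the chain rule to get $\grad^2\varphi(\grad\phi(x))\,\grad^2\phi(x)=\grad^2\gamma(x)$, invoke Schwarz's theorem so that all three Hessians are symmetric, and conclude commutation from the fact that a product of two symmetric matrices is symmetric iff the factors commute. There is nothing to add.
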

\begin{proof}
	Since $\varphi,\phi,\gamma$ are continuous twice differentiable, by Schwarz's theorem, their Hessian matrices $\grad^2 \varphi$, $\grad^2\phi, \grad^2 \gamma$ are all symmetric. It follows from $\big(\grad \varphi\big) \circ \grad \phi = \grad \gamma$ that $\grad^2\varphi(\grad \phi(x)) \grad^2\phi(x) = \grad^2 \gamma(x)$ for all $x
	\in \R^d$  by multivariate chain rule. Since the product of two real symmetric matrices is symmetric if and only if they commute,   $\grad^2\varphi(\grad \phi(x)) $ and $\grad^2 \phi(x)$ must commute.
\end{proof}

 \begin{lemma}\label{lm: diagm}
	Let $A$ be an $\R^{d\times d}$ matrix such that $A\Delta=\Delta A$ for some diagonal matrix  $\Delta$ with distinct diagonal entries.  Then $A$ is a diagonal matrix.
\end{lemma}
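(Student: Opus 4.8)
The plan is to prove this by a direct entrywise comparison of the two matrix products, exploiting the fact that multiplication by a diagonal matrix acts very simply on rows and columns. Write $\Delta = \operatorname{diag}(\lambda_1,\dots,\lambda_d)$ with $\lambda_i \neq \lambda_j$ whenever $i\neq j$, and let $A = (a_{ij})$.

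First I would compute the $(i,j)$ entry of each side of $A\Delta = \Delta A$. Since $\Delta$ is diagonal, $(A\Delta)_{ij} = \sum_{k} a_{ik}\Delta_{kj} = a_{ij}\lambda_j$, because the only nonzero term in the sum is $k=j$. Similarly $(\Delta A)_{ij} = \sum_k \Delta_{ik} a_{kj} = \lambda_i a_{ij}$. Equating these gives $a_{ij}\lambda_j = \lambda_i a_{ij}$, that is, $(\lambda_j - \lambda_i)\, a_{ij} = 0$ for all $i,j$.

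Next I would read off the conclusion: for $i \neq j$ the hypothesis that $\Delta$ has distinct diagonal entries gives $\lambda_j - \lambda_i \neq 0$, so the relation forces $a_{ij} = 0$. Hence every off-diagonal entry of $A$ vanishes, i.e.\ $A$ is diagonal, which is exactly the claim. (The diagonal entries $a_{ii}$ are unconstrained, as expected.)

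There is no real obstacle here; the only thing to be careful about is the bookkeeping in the two products, namely remembering that right-multiplication by $\Delta$ scales the $j$th column by $\lambda_j$ while left-multiplication scales the $i$th row by $\lambda_i$, so that the cross terms cancel and one is left with the scalar equation above. This lemma is then used in the proof of Proposition~\ref{prop: composition_gradient}, where $A$ plays the role of the Hessian $\grad^2\varphi$ evaluated at a point and $\Delta$ arises after diagonalizing $\grad^2\phi_p$ for a suitably chosen $p\in\mP_d^{*}$ whose transport Hessian has distinct eigenvalues.
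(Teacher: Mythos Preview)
Your proof is correct and follows essentially the same entrywise comparison as the paper's own argument: compute $(A\Delta)_{ij}=a_{ij}\lambda_j$ and $(\Delta A)_{ij}=\lambda_i a_{ij}$, then use distinctness of the $\lambda_i$ to force $a_{ij}=0$ for $i\neq j$. One minor remark on your closing comment: in the proof of Proposition~\ref{prop: composition_gradient} the diagonal $\Delta$ is not obtained by diagonalizing $\grad^2\phi_p$ but by directly choosing $\phi_p(x)=\tfrac{1}{2}\sum_i \delta_i x_i^2$ so that $\grad^2\phi_p$ is already diagonal with distinct entries.
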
 

\begin{proof}
	Let  $\Delta= \begin{bmatrix}
		\delta_1\\ \ & \delta_2\\ &&\ddots\\&&&\delta_d
	\end{bmatrix}$ where  $\delta_i\neq \delta_j$ whenever $i\neq j$.  Since $A\Delta = \Delta A$, by comparing of the $(i,j)$-th entry of both sides we have 
	\begin{equation}
		\delta_ja_{ij}=\delta_ia_{ij},
	\end{equation}
where $a_{ij}$ denotes the $(i,j)$-th entry of matrix $A$. Since 
$\delta_i\ne \delta_j$,  for $i\neq j$, it follows that for $i\neq j$, $a_{ij}=0$.
\end{proof}

\begin{lemma}\label{lm: mI}
	Let $D$ be a diagonal matrix in $\R^{d\times d}$ such that  $DM=MD$ for some matrix $M$ having the property that it has an eigen-space $E_\lambda=span \{u\}$ and such that all entries of $u$ are non-zero.  Then $D=\alpha I$ where $\alpha \in \R$ and $I_d$ is the identity matrix in  $\R^{d\times d}$.
\end{lemma}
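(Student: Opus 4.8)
The plan is to exploit the elementary fact that a matrix commuting with $M$ must leave every eigenspace of $M$ invariant, and then to use the one-dimensionality of $E_\lambda$ together with the non-vanishing of the entries of $u$ to pin down all the diagonal entries of $D$.

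First I would record the standard observation: if $DM = MD$ and $Mu = \lambda u$, then $M(Du) = D(Mu) = \lambda (Du)$, so $Du$ again lies in the $\lambda$-eigenspace of $M$. Since by hypothesis $E_\lambda = \spn\{u\}$ is one-dimensional, this forces $Du = \mu u$ for some scalar $\mu \in \R$; that is, $u$ is an eigenvector of $D$ with eigenvalue $\mu$.

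Next I would read this off in coordinates. Writing $D = \mathrm{diag}(d_1,\dots,d_d)$ and $u = (u_1,\dots,u_d)^{T}$, the relation $Du = \mu u$ says $d_i u_i = \mu u_i$ for every $i$. Because each entry $u_i$ is non-zero, we may cancel $u_i$ and conclude $d_i = \mu$ for all $i$, so $D = \mu I_d$; setting $\alpha := \mu$ finishes the argument.

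There is no serious obstacle in this lemma; the only subtlety is that both hypotheses are genuinely used — the one-dimensionality of $E_\lambda$ is what forces $Du$ to be a \emph{multiple} of $u$, and the absence of zero entries in $u$ is what upgrades "$D$ fixes the line $\spn\{u\}$" to "$D$ is scalar". In the intended application $M$ is $\grad^2\varphi$ evaluated at a suitable point and $D$ is the diagonal matrix produced by Lemma \ref{lm: diagm}, so this lemma is precisely the last step that converts "diagonal and commuting with $M$" into "scalar multiple of the identity", completing the chain toward Proposition \ref{prop: composition_gradient}.
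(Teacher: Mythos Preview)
Your proof is correct and essentially identical to the paper's: both use $DM=MD$ to show $Du\in E_\lambda$, invoke one-dimensionality to get $Du=\alpha u$, and then cancel the non-zero entries of $u$ to conclude $D=\alpha I_d$. One small slip in your closing contextual remark: in the paper's application the roles are reversed --- $D$ is $\grad^2\varphi$ (already known to be diagonal from Lemma~\ref{lm: diagm}) and $M$ is the fixed positive-definite matrix defining $\phi_p$, not the other way around.
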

\begin{proof}
 Since $DM = MD$, it follows that 
\begin{equation}
	MDu=DMu = D\lambda u = \lambda D u.
\end{equation}
Hence $Du\in E_{\lambda}(M)$ and $Du = \alpha u$ for some $\alpha \in \R$ since $\dim E_{\lambda}(M) = 1$. Using the fact all entries of $u$ are non-zero and comparing the entries of $Du$ and $\alpha u$, one immediately gets that $D = \alpha I_d$.
\end{proof}
As stated at the beginning of this section,  $\phi_p$ will denote a convex function such that $\grad\phi_p$ is the optimal transport map between $r$ and $p$.

\begin{proof}[Proof of Proposition \ref {prop: composition_gradient} ]
	By Lemma \ref{lm:hessiancommute}, we have that the matrices $\grad^2\varphi\big( \grad\phi_p (x)\big)$ and $\grad^2 \phi_p(x)$ must commute for every $x\in \R^d$, and for every $p\in \mP_d^{*}$. To prove the proposition, we make judicious choices for $p$. First, we choose $p=p(\delta_1,\cdots,\delta_d)$ with $\phi_p (x) = \frac{1}{2} \sum\limits_{i=1}^n \delta_ix_i^2$ where $\delta_j>0$ for $j=1,\cdots,d$. In particular, $r(x) = |\det \grad^2\phi_p(x)| p(\grad \phi_p(x))$ where $r\in \mP_d$ is the reference. Since $\phi_p$ is convex and quadratic, it is not difficult to show that $p \in \mP^{*}_d$. It is easy to see that $\grad \phi_p (x)= \begin{bmatrix}
		\delta_1x_1\\ \vdots \\ \delta_dx_d
	\end{bmatrix}$ and $\grad^2\phi_p(x)=\begin{bmatrix}
		\delta_1\\ \ & \delta_2\\ &&\ddots\\&&& \delta_d
	\end{bmatrix}$. By Lemma \ref{lm: diagm}, setting $A=\grad^2\varphi(\grad \phi_p(x))$ and  $\Delta=\phi_p(x)$, we conclude that $A=\grad^2\varphi(\grad \phi_p(x))$ is a diagonal matrix for every $x\in \R^d$.  Since $\grad\phi_p(x) :\R^d\rightarrow \R^d$ is bijective, it follows that $\grad^2\varphi(x)$ is also diagonal for every $x\in \R^d$. Next, we choose $p=p(M)$ such that $\phi_p(x)=\frac 1 2 x^tMx$, where $M$ is a constant positive definite matrix with an eigenvector $u$ whose entries are non-zero and  whose corresponding eigenspace has dimension $1$. A simple construction using the spectral decomposition of symmeric matrices show that such an $M$ exists. For these choices of $p$, $\phi_p$, we have  $\grad^2\phi_p(x)=M$ which is a constant matrix independent of $x$.  Again, it is not difficult to show that $p\in \mP^{*}_d$.   Hence  $\grad^2\varphi(Mx)$ and $M$ commute. Since $\grad^2\varphi(Mx)$ is diagonal for every $x\in \R^d$,  using Lemma  \ref{lm: mI} with $D=\grad^2 \varphi(x)$ , we have that $\grad^2\varphi(Mx) = \beta_x I$, where $\beta_x$ is a constant depending on $x$. Since $M$ is an invertible matrix, it follow that $\grad^2\varphi (x)=\alpha_x I_d$ where $\alpha_x$ is a constant depending on $x$. Since $\frac{\partial^2 \varphi}{\partial x_i\partial x_j}\equiv 0$ for $i\neq j$, we have that $\varphi(x_1,...,x_d) = F_1(x_1)+\cdots+F_n(x_d)$ for some univariate functions $F_1,...,F_d$.
	Hence $F_1^{\dprime}(x_1)=F_2^{\dprime}(x_2)=\cdots = F_d^{\dprime}(x_d)$ for all $(x_1,x_2,...,x_d)\in \R^d$. If follows that $F_i^{\dprime}$ must be the same constant function for
	$i=1,...,d$ and that $\alpha_x=\alpha$ is independent of $x$, which implies that $\grad^2 \varphi(x) \equiv \alpha I_d$ for some constant $\alpha$. 	

	\end{proof}

\section{Summary and Open questions}\label{Sec: conclusion}

In this paper we have worked to highlight and clarify certain properties of an emerging set of transport-based signal transforms. More specifically, we have worked to show that for certain types of signals and generative models, the transport transforms discussed earlier have the ability to render signal classes convex in transform space. As convex signal classes render solutions to estimation and detection problems much simpler to solve (e.g. via linear least squares, or linear classification), the topic is important. {For illustration purposes, let us consider an image class consisting of random translations and scalings of a particular digit in the MNIST dataset.   When the training set is limited to include only relatively small variations (in this case,  small translations and  scalings) of this digit, positive linear combinations of the training samples in transform domain can be used to model the data class beyond the available training samples. In this way, the framework can be expanded  to include data corresponding to both small variations present  in the training  set as well as larger variations not present in the training set but may be present in a testing set.
In that sense the convexity property of the transform allows us to extrapolate the model beyond the available training data. In general, depending on the specific model assumptions (e.g., whether isotropic scaling diffeomorphisms are present in the generative model $\GM_{p,\cH}$), taking convex or positive linear combinations of the limited training data produces possibly many unobserved data of the corresponding data class (in transform domain). In summary, if the two disjoint data classes conform to the convexity condition as well as the algebraic generative model, data generated by the training data in the two classes according the above process stay in two disjoint convex sets in transform domain.}

While the picture is far from complete, we hope that this paper serves a starting point on guiding meaningful applications of these transforms and opens the door to deeper understanding about when and how to apply these transforms.

\subsection{List of contributions}
More specifically, we make the connection between the convex group structure in our generative model and convex partitions of the transformed signal space via the formula \eqref{eq: transport_commute_1}, which holds naturally in the one-dimensional case (see Lemma \ref{lm: 1d_composition_prop}). Furthermore, we give examples of convex groups of diffeomorphisms in the one-dimensional case and show that there are infinitely many such groups.

In dimension $d\geq 2$, we show that the only groups $\cH$ of diffeomorphisms that validate formula \eqref{eq: transport_commute_1} for all $p\in \mP_d$ and all  $h\in \cH$ are subgroups of $\cH_a$, which is {the} group of translations and isotropic scaling diffeomorphisms. In particular, any convex subgroup of $\cH_a$ generates a convex partition of the transformed signal space $\widehat\mP_d$. 

Moreover, in dimension two, we show how to construct a  group $\cH_r\subset \FS_2$ which is larger than $\cH_a$ such that \eqref{eq: transport_commute_1} holds for all $h\in \cH_r$ and  $p$ in $\mP_r$ which is a subset of $\mP_d$. In particular, any convex subgroup of $\cH_r$ generates a convex partition of $\widehat\mP_r$. {In Example \ref{example2D}, by judiciously picking $f,g$ as in the construction of $\cH_r$ (cf. Definition \ref{def:Hr} ), a convex subgroup $\cH_s$ of {the set of} affine transformations is given. Moreover the set of matrices $A$ defining $h$ in $\cH_s$ forms a commutative convex subgroup of the set of symmetric positive definite matrices. Such matrix A   is either a matrix that corresponds to an isotropic scaling or a matrix which has positive eigenvalues with eigenvectors $\begin{bmatrix}
    1\\1
\end{bmatrix}$ and $\begin{bmatrix}
    1\\-1
\end{bmatrix}.$ In summary, $\cH_s$ includes compositions of isotropic scalings, translations and stretching  in the directions $\begin{bmatrix}
    1\\1
\end{bmatrix}$ and $\begin{bmatrix}
    1\\-1
\end{bmatrix}$.  We leave the construction of more general groups in $\FS_2$ that yield convex partitions of $\mP_r$ and concrete physical applications as future research topics.  }

{Note that as long as the composition property \eqref{eq: transport_commute_1} holds, the convexity of $\cH^{-1}$ implies the convexity of $\widehat \GM_{p,\cH}$. Though a convex group partitions $\widehat \mP_d$ into convex equivalent classes, there are situations where the group structure is not needed. For example, if a signal class $\GM_{p,\cH}$ can be generated by a specific template $p$ under a set of transportations $\cH$, $\cH^{-1}$ being convex guarantees that $\widehat \GM_{p,\cH}$ is convex ($\cH$ does not have to be a group). In contrast, if any signal in a signal class $\GM_{p,\cH}$ can be a generating template, it is not hard to see  that $\cH$ must indeed be a group. This property {that $\cH^{-1}$ being convex  }  often enables straightforward practical solutions to engineering problems obviating the need for computationally expensive, nonlinear, non-convex, optimization methods \cite{Rubaiyat20,Nichols:19}.}

\subsection{Open questions}
In 1D, a characterization of convex subgroups of $\FS_1$ is missing. More specifically, one can ask the following:
\begin{enumerate}
    \item Besides $\FS_1$,  are there subgroups of $\FS_1$ that are not of the form of those in Example \ref{exsubgrp}?
    \item If the answer to the previous question is yes, can we give more examples of  convex subgroups, and can we characterize the subgroups into a few concrete categories? 
    \item For every group $\cH$ the set $\widehat \mP_1$ is tiled  by a convex structure in $\widehat \mP_1$. What is the geometry of this structure?

\end{enumerate}

In multi-dimensions, while it is convenient to make use of formula \eqref{eq: transport_commute_1} to derive convexity results similar to the ones in one dimension, it is not necessary. In particular, one can ask the following questions:
\begin{enumerate}
    \item Can one derive that the convexity of $\widehat \GM_{p,\cH}$ from the convexity of $\cH$ without formula \eqref{eq: transport_commute_1} being true for all $h\in \cH$?
    
    \item Are there other conditions one can impose on the model $\GM_{p,\cH}$ other than that $\cH$ is convex so that $\widehat \GM_{p,\cH}$ is convex?

\end{enumerate}

There are several natural questions related to the relaxation in dimension two:
\begin{enumerate}
\item Are there more interesting examples of subgroups of $\cH_r$ that have connections to possible applications?

\item For dimension $d>2$, using Example \ref{example2D} as a guide, what are the set of convex subgroups  $\cH$ of $\FS^G_d$  when we   restrict the signals to some subsets of $\mP_d$?
\end{enumerate}

The above discussions fall under the framework of an algebraic generative model $\GM_{p,\cH}$ with a single template $p$ and  a convex group $\cH$ of diffeomorphisms. In fact, it might be suitable,  to consider multiple templates for the generative modeling in certain applications,  or to not assume a group structure for $\cH$ when modeling certain image classes.  We leave such extensions for future research topics and believe that delving into these questions could potentially lead to more thoughtful engineering modeling, algorithmic design and new interesting mathematics.

\section*{Acknowledgment}
This work is supported by NIH award R01 GM130825. The authors would also like to thank Longxiu Huang, Soheil Kolouri, Armenak Petrosyan, and Mohammad Shifat-E-Rabbi for their careful reading of our paper and their pertinent suggestions.


\bibliographystyle{abbrv}
\bibliography{references}

\begin{thebibliography}{10}

\bibitem{arjovsky2017wasserstein}
M.~Arjovsky, S.~Chintala, and L.~Bottou.
\newblock Wasserstein {GAN}.
\newblock {\em arXiv preprint arXiv:1701.07875}, 2017.

\bibitem{basu2014}
S.~Basu, S.~Kolouri, and G.~Rohde.
\newblock Detecting and visualizing cell phenotype differences from microscopy
  images using transport-based morphometry.
\newblock {\em Proc. Natl. Acad. Sci. U.S.A.}, 111(9):3448--3453, 2014.

\bibitem{Belhumeur97}
P.~N. {Belhumeur}, J.~P. {Hespanha}, and D.~J. {Kriegman}.
\newblock Eigenfaces vs. fisherfaces: recognition using class specific linear
  projection.
\newblock {\em IEEE Transactions on Pattern Analysis and Machine Intelligence},
  19(7):711--720, 1997.

\bibitem{brenier1991}
Y.~Brenier.
\newblock Polar factorization and monotone rearrangement of vector-valued
  functions.
\newblock {\em Commun. Pure Appl. Math.}, 44(4):375--417, 1991.

\bibitem{cortes1995}
C.~Cortes and V.~Vapnik.
\newblock Support-vector networks.
\newblock {\em Mach. Learn.}, 20(3):273--297, 1995.

\bibitem{emerson2020turbulence}
T.~H. Emerson and J.~M. Nichols.
\newblock Fitting local, low-dimensional parameterizations of optical
  turbulence modeled from optimal transport velocity vectors.
\newblock {\em Pattern Recognition Letters}, 133:123--128, 2020.

\bibitem{fisher1936}
R.~A. Fisher.
\newblock The use of multiple measurements in taxonomic problems.
\newblock {\em Annals of eugenics}, 7(2):179--188, 1936.

\bibitem{Guan2019}
S.~{Guan}, B.~{Liao}, Y.~{Du}, and X.~{Yin}.
\newblock Vehicle type recognition based on {R}adon-{CDT} hybrid transfer
  learning.
\newblock In {\em 2019 IEEE 10th International Conference on Software
  Engineering and Service Science (ICSESS)}, pages 1--4, 2019.

\bibitem{haker2004}
S.~Haker, L.~Zhu, A.~Tannenbaum, and S.~Angenent.
\newblock Optimal mass transport for registration and warping.
\newblock {\em Int. J. Comput. Vis.}, 60(4):225--240, 2004.

\bibitem{hastie2009elements}
T.~Hastie, R.~Tibshirani, and J.~Friedman.
\newblock {\em The elements of statistical learning: data mining, inference,
  and prediction}.
\newblock Springer Science \& Business Media, 2009.

\bibitem{kantorovich1942translation}
L.~V. {K}antorovich.
\newblock On translation of mass (in {R}ussian), {C R. Doklady}.
\newblock {\em Acad. Sci. USSR}, 37:199--201, 1942.

\bibitem{kolouri2016c}
S.~Kolouri, S.~Park, and G.~Rohde.
\newblock The {R}adon cumulative distribution transform and its application to
  image classification.
\newblock {\em IEEE Trans. Image Process.}, 25(2):920--934, 2016.

\bibitem{kolouri2017optimal}
S.~Kolouri, S.~R. Park, M.~Thorpe, D.~Slepcev, and G.~K. Rohde.
\newblock Optimal mass transport: Signal processing and machine-learning
  applications.
\newblock {\em IEEE Signal Processing Magazine}, 34(4):43--59, 2017.

\bibitem{kolouri2016b}
S.~Kolouri, A.~Tosun, J.~Ozolek, and G.~Rohde.
\newblock A continuous linear optimal transport approach for pattern analysis
  in image datasets.
\newblock {\em Pattern Recognit.}, 51:453--462, 2016.

\bibitem{kundu2018discovery}
S.~Kundu, S.~Kolouri, K.~I. Erickson, A.~F. Kramer, E.~McAuley, and G.~K.
  Rohde.
\newblock Discovery and visualization of structural biomarkers from {MRI} using
  transport-based morphometry.
\newblock {\em NeuroImage}, 167:256--275, 2018.

\bibitem{mccullagh1989generalized}
P.~McCullagh and J.~A. Nelder.
\newblock {\em Generalized Linear Models}, volume~37.
\newblock CRC Press, 1989.

\bibitem{monge1781}
G.~Monge.
\newblock {\em M\'{e}moire sur la th\'{e}orie des d\'{e}blais et des remblais}.
\newblock De l'Imprimerie Royale, 1781.

\bibitem{moosmuller2020linear}
C.~Moosmüller and A.~Cloninger.
\newblock Linear optimal transport embedding: Provable fast wasserstein
  distance computation and classification for nonlinear problems, 2020.

\bibitem{ni2009local}
K.~Ni, X.~Bresson, T.~Chan, and S.~Esedoglu.
\newblock Local histogram based segmentation using the wasserstein distance.
\newblock {\em International journal of computer vision}, 84(1):97--111, 2009.

\bibitem{nichols2019time}
J.~M. Nichols, M.~N. Hutchinson, N.~Menkart, G.~A. Cranch, and G.~K. Rohde.
\newblock Time delay estimation via wasserstein distance minimization.
\newblock {\em IEEE Signal Processing Letters}, 26(6):908--912, 2019.

\bibitem{Nichols:19}
J.~M. Nichols, M.~N. Hutchinson, N.~Menkart, G.~A. Cranch, and G.~K. Rohde.
\newblock Time delay estimation via {W}asserstein distance minimization.
\newblock {\em Signal Processing Letters}, 26(6):908--912, 2019.

\bibitem{ozolek2014}
J.~Ozolek, A.~Tosun, W.~Wang, C.~Chen, S.~Kolouri, S.~Basu, H.~Huang, and
  G.~Rohde.
\newblock Accurate diagnosis of thyroid follicular lesions from nuclear
  morphology using supervised learning.
\newblock {\em Med. Image. Anal.}, 18(5):772--780, 2014.

\bibitem{park2018}
S.~Park, L.~Cattell, J.~Nichols, A.~Watnik, T.~Doster, and G.~Rohde.
\newblock De-multiplexing vortex modes in optical communications using
  transport-based pattern recognition.
\newblock {\em Opt. Express}, 26(4):4004--4022, 2018.

\bibitem{park2017}
S.~Park, S.~Kolouri, S.~Kundu, and G.~Rohde.
\newblock The cumulative distribution transform and linear pattern
  classification.
\newblock {\em Appl. Comput. Harmon. Anal.}, 2017.

\bibitem{park2018multiplexing}
S.~R. Park, L.~Cattell, J.~M. Nichols, A.~Watnik, T.~Doster, and G.~K. Rohde.
\newblock De-multiplexing vortex modes in optical communications using
  transport-based pattern recognition.
\newblock {\em Optics express}, 26(4):4004--4022, 2018.

\bibitem{Rubaiyat20}
A.~H.~M. {Rubaiyat}, K.~M. {Hallam}, J.~M. {Nichols}, M.~N. {Hutchinson},
  S.~{Li}, and G.~K. {Rohde}.
\newblock Parametric signal estimation using the cumulative distribution
  transform.
\newblock {\em IEEE Transactions on Signal Processing}, 68:3312--3324, 2020.

\bibitem{rubner2000}
Y.~Rubner, C.~Tomasi, and L.~Guibas.
\newblock The earth mover's distance as a metric for image retrieval.
\newblock {\em IJCV}, 40(2):99--121, 2000.

\bibitem{santambrogio2015optimal}
F.~Santambrogio.
\newblock {\em Optimal transport for applied mathematicians}.
\newblock Springer, 2015.

\bibitem{schiebinger2019optimal}
G.~Schiebinger, J.~Shu, M.~Tabaka, B.~Cleary, V.~Subramanian, A.~Solomon,
  J.~Gould, S.~Liu, S.~Lin, P.~Berube, et~al.
\newblock Optimal-transport analysis of single-cell gene expression identifies
  developmental trajectories in reprogramming.
\newblock {\em Cell}, 176(4):928--943, 2019.

\bibitem{shen2018wasserstein}
J.~Shen, Y.~Qu, W.~Zhang, and Y.~Yu.
\newblock Wasserstein distance guided representation learning for domain
  adaptation.
\newblock In {\em Thirty-Second AAAI Conference on Artificial Intelligence},
  2018.

\bibitem{shifaterabbi2020radon}
M.~Shifat-E-Rabbi, X.~Yin, A.~H.~M. Rubaiyat, S.~Li, S.~Kolouri, A.~Aldroubi,
  J.~M. Nichols, and G.~K. Rohde.
\newblock Radon cumulative distribution transform subspace modeling for image
  classification.
\newblock 2020.
\newblock Preprint available at \url{https://arxiv.org/abs/2004.03669}.

\bibitem{thorpenotes}
M.~Thorpe.
\newblock Introduction to optimal transport.
\newblock \url{https://www.math.cmu.edu/~mthorpe/OTNotes}, 2018.
\newblock Supplementary notes for Introduction to Optimal Transport Lent 2018
  at the University of Cambridge.

\bibitem{tosun2015detection}
A.~B. Tosun, O.~Yergiyev, S.~Kolouri, J.~F. Silverman, and G.~K. Rohde.
\newblock Detection of malignant mesothelioma using nuclear structure of
  mesothelial cells in effusion cytology specimens.
\newblock {\em Cytometry Part A}, 87(4):326--333, 2015.

\bibitem{villani2003topics}
C.~Villani.
\newblock {\em Topics in optimal transportation}.
\newblock Number~58. American Mathematical Soc., 2003.

\bibitem{wang2013}
W.~Wang, D.~Slep{\v{c}}ev, S.~Basu, J.~Ozolek, and G.~Rohde.
\newblock A linear optimal transportation framework for quantifying and
  visualizing variations in sets of images.
\newblock {\em IJCV}, 101(2):254--269, 2013.

\end{thebibliography}

\end{document}